\renewcommand\thesubsubsection{\@Alph\c@section.\@arabic\c@subsubsection}
\newtheorem{prop}{Proposition}
\newcommand{\R}{\mathbb{R}}
\newcommand{\M}{\mathcal{M}}
\newtheoremstyle{TheoremNum}
    {\topsep}{\topsep}              
    {\itshape}                      
    {}                              
    {\bfseries}                     
    {.}                             
    { }                             
    {\thmname{#1}\thmnote{ \bfseries #3}}
\theoremstyle{TheoremNum}
\newtheoremstyle{CorollaryNum}
    {\topsep}{\topsep}              
    {\itshape}                      
    {}                              
    {\bfseries}                     
    {.}                             
    { }                             
    {\thmname{#1}\thmnote{ \bfseries #3}}
\theoremstyle{CorollaryNum}
\newtheoremstyle{LemmaNum}
    {\topsep}{\topsep}              
    {\itshape}                      
    {}                              
    {\bfseries}                     
    {.}                             
    { }                             
    {\thmname{#1}\thmnote{ \bfseries #3}}
\theoremstyle{LemmaNum}
\title{Riemannian Residual Neural Networks}
\author{%
  Isay Katsman$^*$ \\
  Yale University \\
  \texttt{isay.katsman@yale.edu} \\
  \And
  Eric M. Chen$^*$, Sidhanth Holalkere$^*$\\
  Cornell University\\
  \texttt{\{emc348, sh844\}@cornell.edu}\\
  \And
  Anna Asch\\
  Cornell University \\ 
  \texttt{aca89@cornell.edu} \\
  \AND
  Aaron Lou\\
  Stanford University \\
  \texttt{aaronlou@stanford.edu}\\
  \And
  Ser-Nam Lim$^\dagger$ \\
  University of Central Florida\\
  \texttt{sernam@ucf.edu} \\
  \And
  Christopher De Sa\\
  Cornell University\\
  \texttt{cdesa@cs.cornell.edu}
}
\begin{document}

\maketitle

\begin{abstract}
    Recent methods in geometric deep learning have introduced various neural networks to operate over data that lie on Riemannian manifolds. Such networks are often necessary to learn well over graphs with a hierarchical structure or to learn over manifold-valued data encountered in the natural sciences. These networks are often inspired by and directly generalize standard Euclidean neural networks. However, extending Euclidean networks is difficult and has only been done for a select few manifolds. In this work, we examine the residual neural network (ResNet) and show how to extend this construction to general Riemannian manifolds in a geometrically principled manner. Originally introduced to help solve the vanishing gradient problem, ResNets have become ubiquitous in machine learning due to their beneficial learning properties, excellent empirical results, and easy-to-incorporate nature when building varied neural networks. We find that our Riemannian ResNets mirror these desirable properties: when compared to existing manifold neural networks designed to learn over hyperbolic space and the manifold of symmetric positive definite matrices, we outperform both kinds of networks in terms of relevant testing metrics and training dynamics.
\end{abstract}

\section{Introduction}
\begin{wrapfigure}{R}{.45\textwidth}
    \vspace{-5pt}
\centering
  \includegraphics[width=0.35\textwidth,page=2]{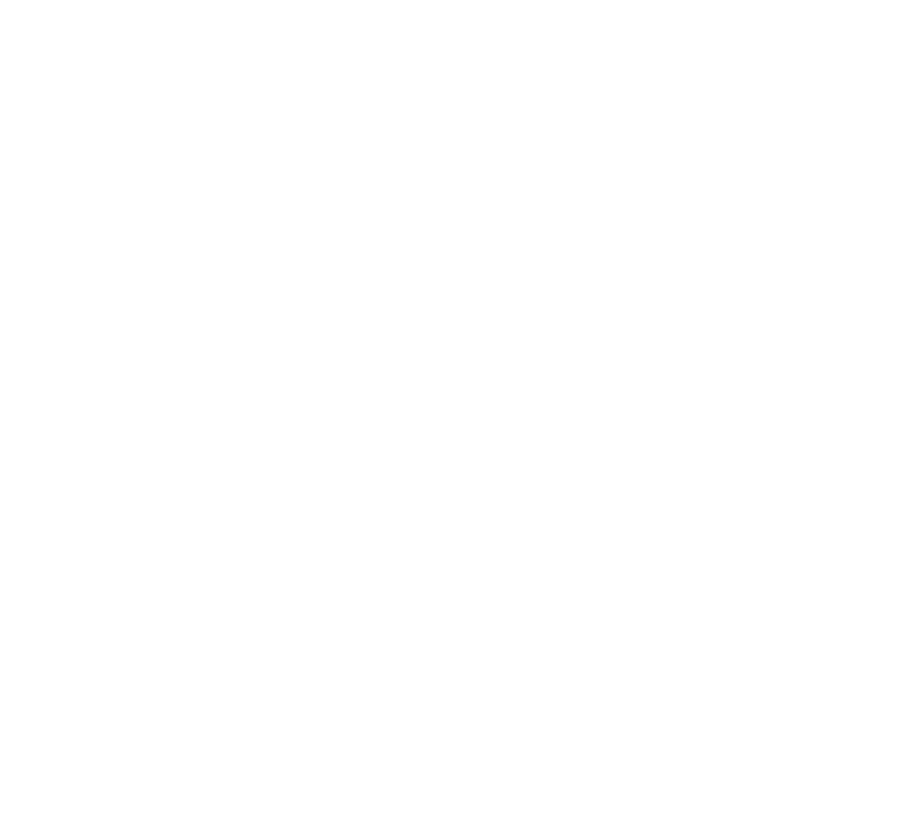}
  \caption{An illustration of a manifold-generalized residual addition. The traditional Euclidean formula $p \leftarrow p + v$ is generalized to $p \leftarrow \exp_p (v)$, where $\exp$ is the Riemannian exponential map. $\M$ is the manifold and $T_p \M$ is the tangent space at $p$.}
  \label{fig:intro_fig}
  \vspace{-10pt}
\end{wrapfigure}

In machine learning, it is common to represent data as vectors in Euclidean space (i.e. $\mathbb{R}^n$). The primary reason for such a choice is convenience, as this space has a classical vectorial structure, a closed-form distance formula, and a simple inner-product computation. Moreover, the myriad existing Euclidean neural network constructions enable performant learning.
{\let\thefootnote\relax\footnotetext{* indicates equal contribution.}}

Despite the ubiquity and success of Euclidean embeddings, recent research \citep{nickel2017poincare} has brought attention to the fact that several kinds of complex data require manifold considerations. Such data are various and range from covariance matrices, represented as points on the manifold of symmetric positive definite (SPD) matrices \citep{Huang2017ARN},
to angular orientations, represented as points on tori, found in the context of robotics \citep{rezende2020normalizing}. However, generalizing Euclidean neural network tools to manifold structures such as these can be quite difficult in practice. Most prior works design network architectures for a specific manifold \citep{ganea2018hyperbolic, cohen2018spherical}, thereby inefficiently necessitating a specific design for each new manifold.
{\let\thefootnote\relax\footnotetext{$^\dagger$Work done while at Meta AI.}}

We address this issue by extending Residual Neural Networks \citep{He2016DeepRL} to Riemannian manifolds in a way that naturally captures the underlying geometry. We construct our network by parameterizing vector fields and leveraging geodesic structure (provided by the Riemannian $\exp$ map) to ``add" the learned vectors to the input points, thereby naturally generalizing a typical Euclidean residual addition. This process is illustrated in Figure \ref{fig:intro_fig}. Note that this strategy is exceptionally natural, only making use of inherent geodesic geometry, and works generally for all smooth manifolds. We refer to such networks as Riemannian residual neural networks.

Though the above approach is principled, it is underspecified, as constructing an efficient learnable vector field for a given manifold is often nontrivial. To resolve this issue, we present a general way to induce a learnable vector field for a manifold $\M$ given only a map $f : \M \rightarrow \mathbb{R}^k$. Ideally, this map should capture intrinsic manifold geometry. For example, in the context of Euclidean space, this map could consist of a series of $k$ projections onto hyperplanes. There is a natural equivalent of this in hyperbolic space that instead projects to horospheres (horospheres correspond to hyperplanes in Euclidean space). More generally, we propose a feature map that once more relies only on geodesic information, consisting of projection to random (or learned) geodesic balls. This final approach provides a fully geometric way to construct vector fields, and therefore natural residual networks, for any Riemannian manifold.

After introducing our general theory, we give concrete manifestations of vector fields, and therefore residual neural networks, for hyperbolic space and the manifold of SPD matrices. We compare the performance of our Riemannian residual neural networks to that of existing manifold-specific networks on hyperbolic space and on the manifold of SPD matrices, showing that our networks perform much better in terms of relevant metrics due to their improved adherence to manifold geometry.

Our contributions are as follows: 
\begin{enumerate}[itemsep=0.5mm,leftmargin=20pt]
    \vspace{-5pt}
    \item We introduce a novel and principled generalization of residual neural networks to general Riemannian manifolds. Our construction relies only on knowledge of geodesics, which capture manifold geometry.
    \item Theoretically, we show that our methodology better captures manifold geometry than pre-existing manifold-specific neural network constructions. Empirically, we apply our general construction to hyperbolic space and to the manifold of SPD matrices. On various hyperbolic graph datasets (where hyperbolicity is measured by Gromov $\delta$-hyperbolicity) our method considerably outperforms existing work on both link prediction and node classification tasks. On various SPD covariance matrix classification datasets, a similar conclusion holds.
    \item Our method provides a way to directly vary the geometry of a given neural network without having to construct particular operations on a per-manifold basis. This provides the novel capability to directly compare the effect of geometric representation (in particular, evaluating the difference between a given Riemannian manifold $(\M, g)$ and Euclidean space $(\mathbb{R}^n,||\cdot||_2)$) while fixing the network architecture.
\end{enumerate}



\vspace{-5pt}
\section{Related Work}
\vspace{-5pt}
Our work is related to but distinctly different from existing neural ordinary differential equation (ODE) \cite{chen2018neural} literature as well a series of papers that have attempted generalizations of neural networks to specific manifolds such as hyperbolic space \citep{ganea2018hyperbolic} and the manifold of SPD matrices \citep{Huang2017ARN}.

\vspace{-3pt}
\subsection{Residual Networks and Neural ODEs}
\vspace{-3pt}

Residual networks (ResNets) were originally developed to enable training of larger networks, previously prone to vanishing and exploding gradients \citep{He2016DeepRL}. Later on, many discovered that by adding a learned residual, ResNets are similar to Euler's method \citep{Weinan2017APO, Lu2017BeyondFL, Haber2017StableAF, Ruthotto2018DeepNN, chen2018neural}. More specifically, the ResNet represented by $\textbf{h}_{t+1} = \textbf{h}_{t} + f(\textbf{h}, \theta_t)$ for $\textbf{h}_t \in \mathbb{R}^D$ mimics the dynamics of the ODE defined by $\frac{d \textbf{h}(t)}{dt} = f(\textbf{h}(t), t, \theta)$. Neural ODEs are defined precisely as ODEs of this form, where the local dynamics are given by a parameterized neural network. Similar to our work, \citet{lou2020neural, Katsman2021EquivariantMF, Falorsi2020NeuralOD, mathieu2020riemannian} generalize neural ODEs to Riemannian manifolds (further generalizing manifold-specific work such as \citet{bose2020latent}, that does this for hyperbolic space). However, instead of using a manifold's vector fields to solve a neural ODE, we learn an objective by parameterizing the vector fields directly (Figure \ref{fig:teaser2}). Neural ODEs and their generalizations to manifolds parameterize a continuous collection of vector fields over time for a single manifold in a dynamic flow-like construction. Our method instead parameterizes a discrete collection of vector fields, entirely untethered from any notion of solving an ODE. This makes our construction a strict generalization of both neural ODEs and their manifold equivalents \cite{lou2020neural, Katsman2021EquivariantMF, Falorsi2020NeuralOD, mathieu2020riemannian}.

\subsection{Riemannian Neural Networks}

Past literature has attempted generalizations of Euclidean neural networks to a number of manifolds.

\textbf{Hyperbolic Space} \citet{ganea2018hyperbolic} extended basic neural network operations (e.g. activation function, linear layer, recurrent architectures) to conform with the geometry of hyperbolic space through gyrovector constructions \citep{Ungar2009AGS}. In particular, they use gyrovector constructions \citep{Ungar2009AGS} to build analogues of activation functions, linear layers, and recurrent architectures. Building on this approach, \citet{chami2019hyperbolic} adapt these constructions to hyperbolic versions of the feature transformation and neighborhood aggregation steps found in message passing neural networks. Additionally, batch normalization for hyperbolic space was introduced in \citet{lou2020differentiating}; hyperbolic attention network equivalents were introduced in \citet{Glehre2019HyperbolicAN}. Although gyrovector constructions are algebraic and allow for generalization of neural network operations to hyperbolic space and beyond, we note that they do not capture intrinsic geodesic geometry. In particular, we note that the gyrovector-based hyperbolic linear layer introduced in \citet{ganea2018hyperbolic} reduces to a Euclidean matrix multiplication followed by a learned hyperbolic bias addition (see Appendix \ref{sec:hnnred}). Hence all non-Euclidean learning for this case happens through the bias term.
In an attempt to resolve this, further work has focused on imbuing these neural networks with more hyperbolic functions \citep{shimizu2020hyperbolic, chen-etal-2022-fully}. \citet{chen-etal-2022-fully} notably constructs a hyperbolic residual layer by projecting an output onto the Lorentzian manifold. However, we emphasize that our construction is more general while being more geometrically principled as we work with fundamental manifold operations like the exponential map rather than relying on the niceties of Lorentz space.

\citet{Yu2022HyLaHL} make use of randomized hyperbolic Laplacian features to learn in hyperbolic space. We note that the features learned are shallow and are constructed from a specific manifestation of the Laplace-Beltrami operator for hyperbolic space. In contrast, our method is general and enables non-shallow (i.e., multi-layer) feature learning.

\textbf{SPD Manifold} Neural network constructs have been extended to the manifold of symmetric positive definite (SPD) matrices as well.
In particular, SPDNet \citep{Huang2017ARN} is an example of a widely adopted SPD manifold neural network which introduced SPD-specific layers analogous to Euclidean linear and ReLU layers. Building upon SPDNet, \citet{Brooks2019RiemannianBN} developed a batch normalization method to be used with SPD data. Additionally, \citet{Lopez2021VectorvaluedDA} adapted gyrocalculus constructions used in hyperbolic space to the SPD manifold.

\textbf{Symmetric Spaces} Further work attempts generalization to symmetric spaces. \citet{Sonoda2022FullyConnectedNO} design fully-connected networks over noncompact symmetric spaces using particular theory from Helgason-Fourier analysis \cite{Helgason1965RadonFourierTO}, and \citet{Chakraborty2018ManifoldNetAD} attempt to generalize several operations such as convolution to such spaces by adapting and developing 
a weighted Fr\'echet mean construction. We note that the Helgason-Fourier construction in \citet{Sonoda2022FullyConnectedNO} exploits a fairly particular structure, while the weighted Fr\'echet mean construction in \citet{Chakraborty2018ManifoldNetAD} is specifically introduced for convolution, which is not the focus of our work (we focus on residual connections).


Unlike any of the manifold-specific work described above, our residual network construction can be applied generally to any smooth manifold and is constructed solely from geodesic information.



\vspace{-5pt}
\section{Background}
\vspace{-5pt}
\begin{figure*}[t]
\centering
\includegraphics[width=0.8\textwidth, page=1]{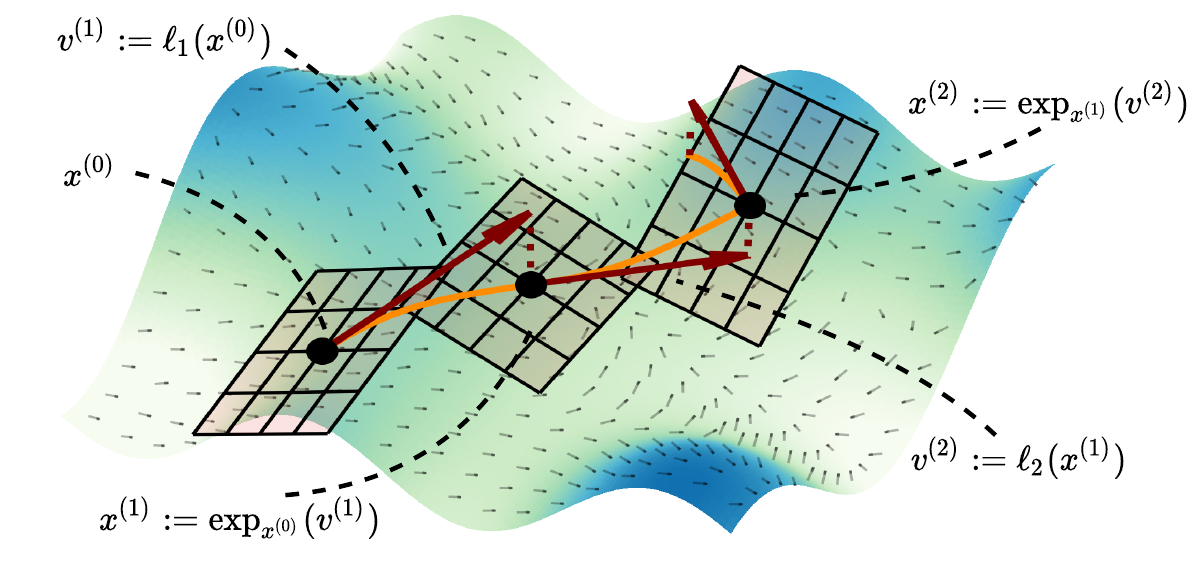}

\caption{A visualization of a Riemannian residual neural network on a manifold $\M$. Our model parameterizes vector fields on a manifold. At each layer in our network, we take a step from a point in the direction of that vector field (brown), which is analogous to the residual step in a ResNet.}
\label{fig:teaser2}
\vspace{-10px}
\end{figure*}

In this section, we cover the necessary background for our paper; in particular, we introduce the reader to the necessary constructs from Riemannian geometry. For a detailed introduction to Riemannian geometry, we refer the interested reader to textbooks such as \citet{Lee1997RiemannianMA}.

\subsection{Riemannian Geometry}
A topological manifold $(\M, g)$ of dimension $n$ is a locally Euclidean space, meaning there exist homeomorphic\footnote{A homeomorphism is a continuous bijection with continuous inverse.} functions (called ``charts") whose domains both cover the manifold and map from the manifold into $\mathbb{R}^n$ (i.e. the manifold ``looks like" $\mathbb{R}^n$ locally). A smooth manifold is a topological manifold for which the charts are not simply homeomorphic, but diffeomorphic, meaning they are smooth bijections mapping into $\mathbb{R}^n$ and have smooth inverses. We denote $T_p \M$ as the tangent space at a point $p$ of the manifold $\M$. Further still, a Riemannian manifold\footnote{Note that imposing Riemannian structure does not considerably limit the generality of our method, as any smooth manifold that is Hausdorff and second countable has a Riemannian metric \citep{Lee1997RiemannianMA}.} $(\M, g)$ is an $n$-dimensional smooth manifold with a smooth collection of inner products $(g_p)_{p \in \M}$ for every tangent space $T_p \M$. The Riemannian metric $g$ induces a distance $d_g : \M \times \M \rightarrow \mathbb{R}$ on the manifold.



\subsection{Geodesics and the Riemannian Exponential Map}

\textbf{Geodesics}
A geodesic is a curve of minimal length between two points $p,q \in \M$, and can be seen as the generalization of a straight line in Euclidean space. Although a choice of Riemannian metric $g$ on $\M$ appears to only define geometry locally on $\M$, it induces global distances by integrating the length (of the ``speed" vector in the tangent space) of a shortest path between two points:
\begin{equation}
d(p,q) = \inf_\gamma \int^1_0 \sqrt{g_{\gamma(t)} (\gamma'(t), \gamma'(t))}\:dt
\end{equation}

where $\gamma \in C^\infty ([0,1], \M)$ is such that $\gamma(0)=p$ and $\gamma(1) = q$.


For $p \in \M$ and $v \in T_p\M$, there exists a unique geodesic $\gamma_v$ where $\gamma(0) = p$, $\gamma'(0) = v$ and the domain of $\gamma$ is as large as possible. We call $\gamma_v$ the maximal geodesic \citep{Lee1997RiemannianMA}.


\textbf{Exponential Map}
The Riemannian exponential map is a way to map $T_p\M$ to a neighborhood around $p$ using geodesics. The relationship between the tangent space and the exponential map output can be thought of as a local linearization, meaning that we can perform typical Euclidean operations in the tangent space before projecting to the manifold via the exponential map to capture the local on-manifold behavior corresponding to the tangent space operations. For $p \in \M$ and $v \in T_p\M$, the exponential map at $p$ is defined as $\exp_p (v) = \gamma_v(1)$.

One can think of $\exp$ as a manifold generalization of Euclidean addition, since in the Euclidean case we have $\exp_p (v) = p + v$.

\vspace{-5pt}
\subsection{Vector Fields}
\vspace{-5pt}

Let $T_p \M$ be the tangent space to a manifold $\M$ at a point $p$. 
Like in Euclidean space, a vector field assigns to each point $p\in \M$ a tangent vector $X_p \in T_p \M$. A smooth vector field assigns a tangent vector $X_p\in T_p\M$ to each point $p\in \M$ such that $X_p$ varies smoothly in $p$.

\textbf{Tangent Bundle} 
The tangent bundle of a smooth manifold $\M$ is the disjoint union of the tangent spaces $T_p\M$, for all $p\in \M$, denoted by $T\M := \bigsqcup_{p \in \M} T_p \M = \bigsqcup_{p \in \M} \{(p,v) \mid v\in T_p\M\}$.




\textbf{Pushforward} A derivative (also called a \textit{pushforward}) of a map $f : \M \rightarrow \mathcal{N}$ between two manifolds is denoted by $D_p f : T_p \M \rightarrow T_{f(p)} \mathcal{N}$. This is a generalization of the classical Euclidean Jacobian (since $\mathbb{R}^n$ is a manifold), and provides a way to relate tangent spaces at different points on different manifolds.

\textbf{Pullback} Given $\phi : \M \rightarrow \mathcal{N}$ a smooth map between manifolds and $f : \mathcal{N} \rightarrow \R$ a smooth function, the pullback of $f$ by $\phi$ is the smooth function $\phi^* f$ on $\M$ defined by $(\phi^* f) (x) = f(\phi(x))$. When the map $\phi$ is implicit, we simply write $f^*$ to mean the pullback of $f$ by $\phi$.


\vspace{-5pt}
\subsection{Model Spaces in Riemannian Geometry}
\vspace{-5pt}

The three Riemannian model spaces are Euclidean space $\mathbb{R}^n$, hyperbolic space $\mathbb{H}^n$, and spherical space $\mathbb{S}^n$, that encompass all manifolds with constant sectional curvature. Hyperbolic space manifests in several representations like the Poincar\'e ball, Lorentz space, and the Klein model. We use the Poincar\'e ball model for our Riemannian ResNet design (see Appendix~\ref{sec:extended_background} for more details on the Poincar\'e ball model). 

\vspace{-5pt}
\subsection{SPD Manifold}
\vspace{-5pt}

Let $SPD(n)$ be the manifold of $n \times n$ symmetric positive definite (SPD) matrices.  We recall from \citet{Gallier2020DifferentialGA} that $SPD(n)$ has a Riemannian exponential map (at the identity) equivalent to the matrix exponential. Two common metrics used for $SPD(n)$ are the log-Euclidean metric \citep{Gallier2020DifferentialGA}, which induces a flat structure on the matrices, and the canonical affine-invariant metric \citep{Cruceru2021ComputationallyTR,Pennec2005ARF}, which induces non-constant negative sectional curvature. The latter gives $SPD(n)$ a considerably less trivial geometry than that exhibited by the Riemannian model spaces \citep{Bhatia2007PositiveDM} (see Appendix~\ref{sec:extended_background} for more details on $SPD(n)$).



\vspace{-5pt}
\section{Methodology}
\vspace{-5pt}
\begin{figure*}[t]
\centering
\includegraphics[width=\textwidth, page=5]{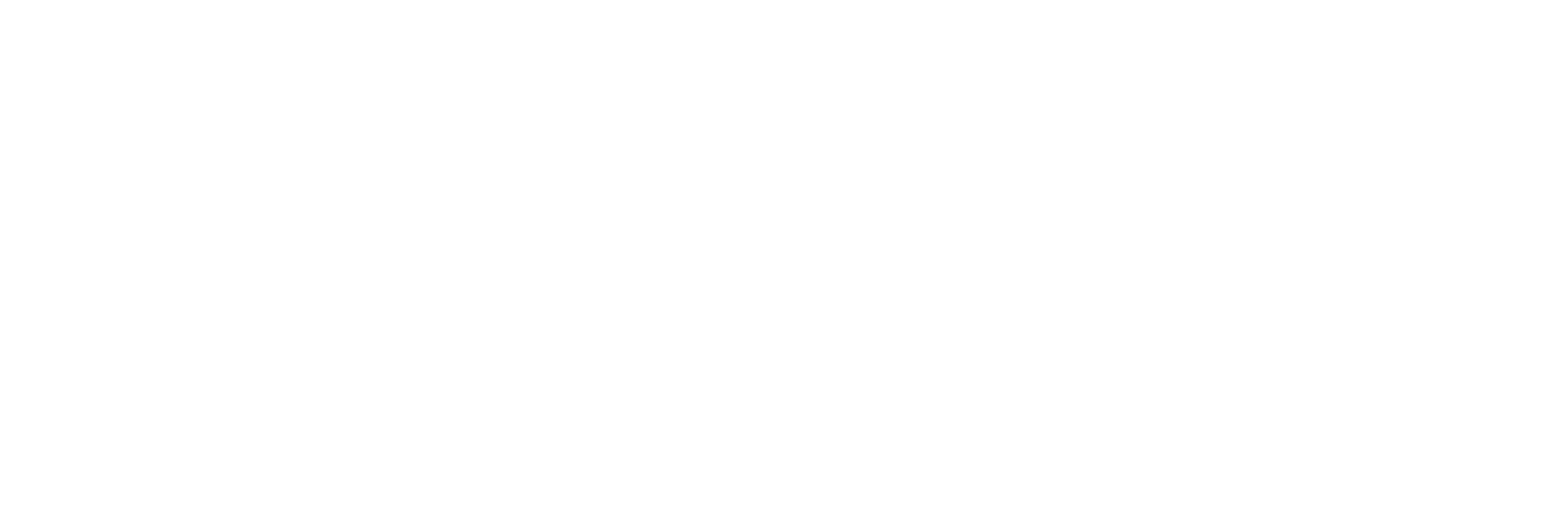}

\caption{An overview of our generalized Riemannian Residual Neural Network (RResNet) methodology. We start by mapping $x^{(0)}\in \mathcal{M}^{(0)}$ to  $\chi^{(1)} \in \mathcal{M}^{(1)}$ using a base point mapping $h_1$. Then, using our paramterized vector field $\ell_i$, we compute a residual $v^{(1)} := \ell_1(\chi^{(1)})$. Finally, we project $v^{(1)}$ back onto the manifold using the Riemannian $\exp$ map, leaving us with $x^{(1)}$. This procedure can be iterated to produce a multi-layer Riemannian residual neural network that is capable of changing manifold representation on a per layer basis.}
\label{fig:teaser}
\vspace{-10px}
\end{figure*}

\label{sec:methodology}

In this section, we provide the technical details behind Riemannian residual neural networks. 

\vspace{-5pt}
\subsection{General Construction}
\vspace{-5pt}

\label{sec:general}
We define a \textbf{Riemannian Residual Neural Network} (RResNet) on a manifold $\M$ to be a function $f: \M \to \M$ defined by
\vspace{-3px}
\begin{align}
    f(x) &:= x^{(m)}
    \\x^{(0)} &:= x 
    \\ x^{(i)} &:= \exp_{x^{(i - 1)}}(\ell_i(x^{(i - 1)})) 
\end{align}


for $x \in \M$, where $m$ is the number of layers and $\ell_i: \M \to T\M$ is a neural network-parameterized
vector field over $\M$. This residual network construction is visualized for the purpose of intuition in Figure \ref{fig:teaser2}. In practice, parameterizing a function from an abstract manifold $\M$ to its tangent bundle is difficult. However, by the Whitney embedding theorem \citep{lee2013introduction}, we can embed $\M \hookrightarrow \R^D$ smoothly for some dimension $D \ge \dim \M$. As such, for a standard neural network $n_i: \R^D \to \R^D$ we can construct $\ell_i$ by
\begin{equation}
    \ell_i(x) := {\rm proj}_{T_x\M}(n_i(x))
\end{equation}
where we note that $T_x\M \subset \R^D$ is a linear subspace (making the projection operator well defined). Throughout the paper we call this the embedded vector field design\footnote{Ideal vector field design is in general nontrivial and the embedded vector field is not a good choice for all manifolds (see Appendix \ref{sec:vecfielddesign}).}. We note that this is the same construction used for defining the vector field flow in \citet{lou2020neural, mathieu2020riemannian, Rozen2021MoserFD}.

We also extend our construction to work in settings where the underlying manifold changes from layer to layer. In particular, for a sequence of manifolds $\M^{(0)}, \M^{(1)}, \dots, \M^{(m)}$ with (possibly learned) maps $h_i: \M^{(i - 1)} \to \M^{(i)}$, our Riemannian ResNet $f : \M^{(0)} \to \M^{(m)}$ is given by
\begin{align}
    f(x) &:= x^{(m)}\\
    x^{(0)} &:= x\\
    x^{(i)} &:= \exp_{h_i(x^{(i - 1)})}(\ell_i(h_i(x^{(i - 1)})))  \forall i \in [m]
\end{align}


with functions $\ell_i: \M^{(i)} \to T\M^{(i)}$ given as above. This generalization is visualized in Figure \ref{fig:teaser}. In practice, our $\M^{(i)}$ will be different dimensional versions of the same geometric space (e.g. $\mathbb{H}^n$ or $\R^n$ for varying $n$). If the starting and ending manifolds are the same, the maps $h_i$ will simply be standard inclusions. When the starting and ending manifolds are different, the $h_i$ may be standard neural networks for which we project the output, or the $h_i$ may be specially design learnable maps that respect manifold geometry. As a concrete example, our $h_i$ for the SPD case map from an SPD matrix of one dimension to another by conjugating with a Stiefel matrix \cite{Huang2017ARN}. Furthermore, as shown in Appendix \ref{sec:theory}, our model is equivalent to the standard ResNet when the underlying manifold is $\mathbb{R}^n$.

\textbf{Comparison with Other Constructions} We discuss how our construction compares with other methods in Appendix \ref{sec:comparison_construction}, but here we briefly note that unlike other methods, our presented approach is fully general and better conforms with manifold geometry.



\vspace{-5pt}
\subsection{Feature Map-Induced Vector Field Design}
\vspace{-5pt}

\label{sec:featuremap}


Most of the difficulty in application of our general vector field construction comes from the design of the learnable vector fields $\ell_i : \M^{(i)} \rightarrow T\M^{(i)}$. Although we give an embedded vector field design above, it is not very principled geometrically. We would like to considerably restrict these vector fields so that their range is informed by the underlying geometry of $\M$. For this, we note that it is possible to induce a vector field $\xi : \M \rightarrow T\M$ for a manifold $\M$  with any smooth map $f : \M \rightarrow \R^k$. In practice, this map should capture intrinsic geometric properties of $\M$ and can be viewed as a feature map, or de facto linearization of $\M$. Given an $x \in \M$, we need only pass $x$ through $f$ to get its feature representation in $\mathbb{R}^k$, then note that since:
$$
D_p f : T_p \M \rightarrow T_{f(p)} \R^k,
$$
we have an induced map:
$$
(D_p f)^* : (T_{f(p)} \R^k)^* \rightarrow (T_p \M)^*,
$$
where $(D_pf)^*$ is the pullback of $D_p f$. Note that $T_{p} \mathbb{R}^k \cong \mathbb{R}^k$ and $(\mathbb{R}^k)^* \cong \mathbb{R}^k$ by the dual space isomorphism. Moreover $(T_p \M)^* \cong T_p \M$ by the tangent-cotangent space isomorphism \cite{lee2013introduction}. Hence, we have the induced map:
$$
(D_p f)^*_r : \mathbb{R}^k \rightarrow T_p \M,
$$
obtained from $(D_pf)^*$, simply by both precomposing and postcomposing the aforementioned isomorphisms, where relevant. $(D_p f)^*_r$ provides a natural way to map from the feature representation to the tangent bundle. Thus, we may view the map $\ell_f : \M \rightarrow T\M$ given by:
$$
\ell_f (x) = (D_x f)^*_r (f(x))
$$
as a deterministic vector field induced entirely by $f$.

\textbf{Learnable Feature Map-Induced Vector Fields} We can easily make the above vector field construction learnable by introducing a Euclidean neural network $n_\theta : \mathbb{R}^k \rightarrow \mathbb{R}^k$ after $f$ to obtain
$
\ell_{f, \theta} (x) = (D_x f)^* (n_\theta(f(x)))
$.

\textbf{Feature Map Design} One possible way to simplify the design of the above vector field is to further break down the map $f: \M \rightarrow \mathbb{R}^k$ into $k$ maps $f_1, \ldots, f_k : \M \rightarrow \R$, where ideally, each map $f_i$ is constructed in a similar way (e.g. performing some kind of geometric projection, where the $f_i$ vary only in terms of the specifying parameters). As we shall see in the following subsection, this ends up being a very natural design decision.

In what follows, we shall consider only smooth feature maps $f : \M \rightarrow \mathbb{R}^k$ induced by a single parametric construction $g_\theta : \M \rightarrow \R$, i.e. the $k$ dimensions of the output of $f$ are given by different choices of $\theta$ for the same underlying feature map\footnote{We use the term ``feature map" for both the overall feature map $f : \M \rightarrow \R^k$ and for the inducing construction $g_\theta : \M \rightarrow \R$. This is well-defined since in our work we consider only feature maps $f : \M \rightarrow \R^k$ that are induced by some $g_\theta : \M \rightarrow \R$.}. This approach also has the benefit of a very simple interpretation of the induced vector field. Given feature maps $g_{\theta_1}, \ldots, g_{\theta_k} : \M \rightarrow \R$ that comprise our overall feature map $f : \M \rightarrow \R^k$, our vector field is simply a linear combination of the maps $\nabla g_{\theta_i} : \M \rightarrow T\M$. If the $g_{\theta_i}$ are differentiable with respect to $\theta_i$, we can even learn the $\theta_i$ themselves.

\vspace{-3pt}
\subsubsection*{4.2.1\:\:\:Manifold Manifestations}
\vspace{-3pt}

In this section, in an effort to showcase how simple it is to apply our above theory to come up with natural vector field designs, we present several constructions of manifold feature maps $g_\theta : \M \rightarrow \R$ that capture the underlying geometry of $\M$ for various choices of $\M$. Namely, in this section we provide several examples of $f : \M \rightarrow \R$ that induce $\ell_f : \M \rightarrow T\M$, thereby giving rise to a Riemannian neural network by Section \ref{sec:general}.


\begin{wrapfigure}{U}{.4\textwidth} 
\centering
\includegraphics[width=0.3\textwidth]{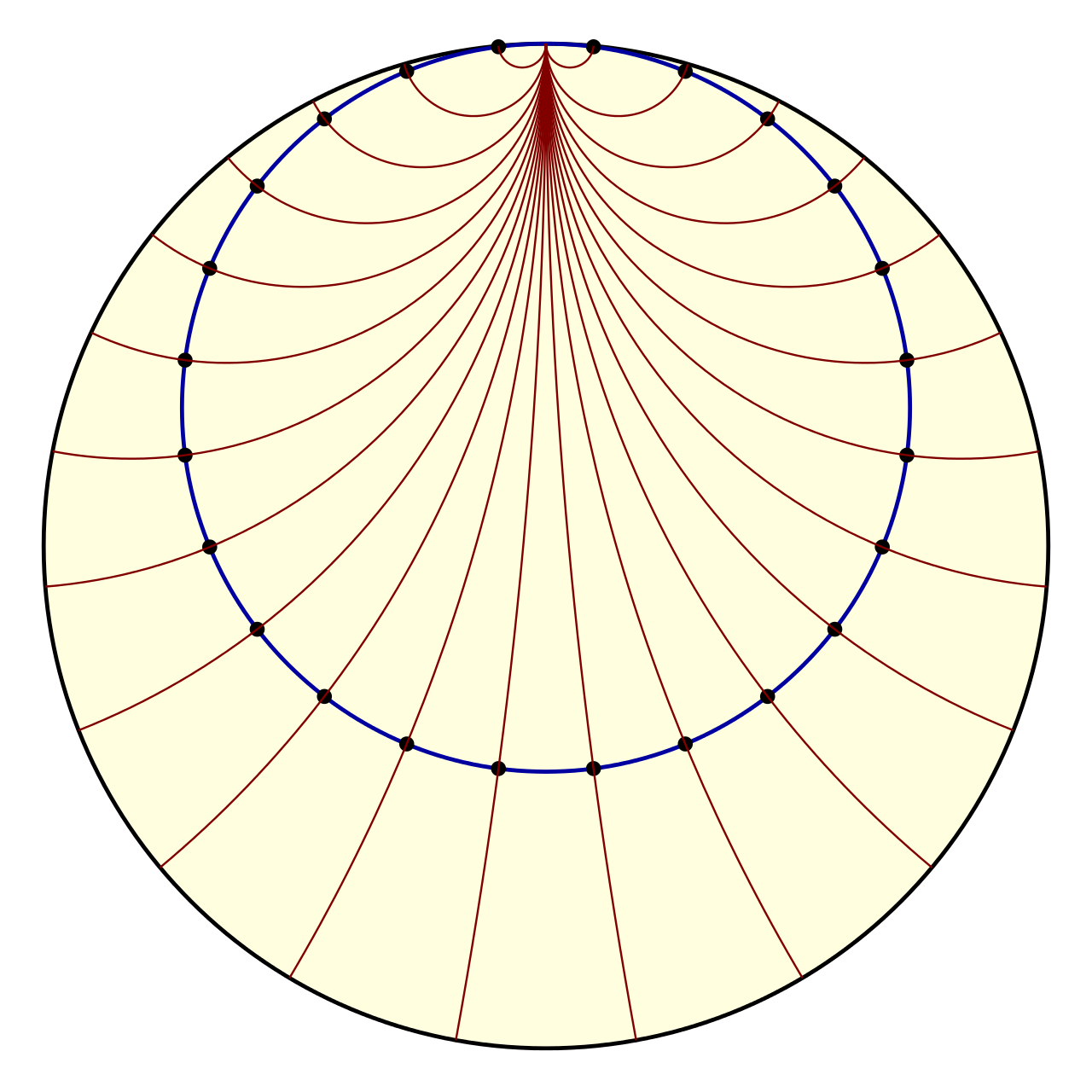}
\caption{\label{fig:horosphere}Example of a horosphere in the Poincar\'e ball representation of hyperbolic space. In this particular two-dimensional case, the hyperbolic space $\mathbb{H}_2$ is visualized via the Poincar\'e disk model, and the horosphere, shown in blue, is called a horocycle.}
\vspace{-10pt}
\end{wrapfigure}

\textbf{Euclidean Space} To build intuition, we begin with an instructive case. We consider designing a feature map for the Euclidean space $\mathbb{R}^n$. A natural design would follow simply by considering hyperplane projection. Let a hyperplane $w^T x + b = 0$ be specified by $w \in \mathbb{R}^n, b \in \mathbb{R}$. Then a natural feature map $g_{w,b} : \mathbb{R}^n \rightarrow \mathbb{R}$ parameterized by the hyperplane parameters is given by hyperplane projection \cite{hyperplanes}:
$g_{w,b}(x) = \frac{|w^T x + b|}{||w||_2}$.


\textbf{Hyperbolic Space} We wish to construct a natural feature map for hyperbolic space. Seeking to follow the construction given in the Euclidean context, we wish to find a hyperbolic analog of hyperplanes. This is provided to us via the notion of horospheres \cite{Heintze1977GeometryOH}. Illustrated in Figure \ref{fig:horosphere}, horospheres naturally generalize hyperplanes to hyperbolic space. We specify a horosphere in the Poincar\'e ball model of hyperbolic space $\mathbb{H}^n$ by a point of tangency $\omega \in \mathbb{S}^{n-1}$ and a real value $b \in \mathbb{R}$. Then a natural feature map $g_{\omega, b} : \mathbb{H}^n \rightarrow \mathbb{R}$ parameterized by the horosphere parameters would be given by horosphere projection \citep{Bridson1999MetricSO}: $g_{\omega, b} (x) = -\log\left(\frac{1-||x||^2_2}{||x-\omega||^2_2} \right) + b$.



\textbf{Symmetric Positive Definite Matrices} The manifold of SPD matrices is an example of a manifold where there is no innate representation of a hyperplane. Instead, given $X \in SPD(n)$, a reasonable feature map $g_k: SPD(n) \to \R$, parameterized by $k$, is to map $X$ to its $k$th largest eigenvalue: $g_k(X) = \lambda_k$. 

\textbf{General Manifolds} For general manifolds there is no perfect analog of a hyperplane, and hence there is no immediately natural feature map. Although this is the 
case, it is possible to come up with a reasonable alternative. We present such an alternative in Appendix \ref{sec:featuremapgeneralman} together with pertinent experiments.

\textbf{\textit{Example: Euclidean Space}} One motivation for the vector field construction $\ell_f (x) = (D_x f)^*_r (f(x))$ is that in the Euclidean case, $\ell_f$ will reduce to a standard linear layer (because the maps $f$ and $(D_x f)^*$ are linear), which, in combination with the Euclidean $\exp$ map, will produce a standard Euclidean residual neural network.

Explicitly, for the Euclidean case, note that our feature map $f : \mathbb{R}^n \rightarrow \mathbb{R}^k$ will, for example, take the form $f(x) = Wx, W \in \mathbb{R}^{k \times n}$ (here we have $b=0$ and $W$ has normalized row vectors). Then note that we have $Df = W$ and $(Df)^* = W^T$. We see for the standard feature map-based construction, our vector field $\ell_f (x) = (D_x f)^* (f(x))$ takes the form $\ell_f (x) = W^T W x$.

For the learnable case (which is standard for us, given that we learn Riemannian residual neural networks), when the manifold is Euclidean space, the general expression $\ell_{f, \theta} (x) = (D_x f)^* (n_\theta(f(x)))$ becomes $\ell_{f,\theta} (x) = W^T n_\theta (W x)$. When the feature maps are trivial projections (onto axis-aligned hyperplanes), we have $W= I$ and $\ell_{f,\theta} (x) = n_\theta(x)$. Thus our construction can be viewed as a generalization of a standard neural network.





\begin{table*}[htb!]
\centering
\scalebox{0.80}{
\begin{tabular}{clcccccccc}
\toprule
& \textbf{Dataset} & \multicolumn{2}{c}{Disease} & \multicolumn{2}{c}{Airport} & \multicolumn{2}{c}{PubMed} & \multicolumn{2}{c}{CoRA} \\
& \textbf{Hyperbolicity} & \multicolumn{2}{c}{$\delta = 0$} & \multicolumn{2}{c}{$\delta = 1$} & \multicolumn{2}{c}{$\delta = 3.5$} & \multicolumn{2}{c}{$\delta = 11$} \\ \cmidrule{2-10}
& \textbf{Task} & LP & NC & LP & NC & LP & NC & LP & NC  \\
\midrule
\parbox[t]{2mm}{\multirow{4}{*}{\rotatebox[origin=c]{90}{\scriptsize{Shallow}}}} & Euc & $59.8$\tiny$\pm 2.0$ & $32.5$\tiny$\pm 1.1$ & $92.0$\tiny$\pm 0.0$ & $60.9$\tiny$\pm 3.4$ & $83.3$\tiny$\pm 0.1$ & $48.2$\tiny$\pm 0.7$ & $82.5$\tiny$\pm 0.3$ & $23.8$\tiny$\pm 0.7$ \\
& Hyp \cite{nickel2017poincare} & $63.5$\tiny$\pm 0.6$ & $45.5$\tiny$\pm 3.3$ & $94.5$\tiny$\pm 0.0$ & $70.2$\tiny$\pm 0.1$ & $87.5$\tiny$\pm 0.1$ & $68.5$\tiny$\pm 0.3$ & $87.6$\tiny$\pm 0.2$ & $22.0$\tiny$\pm 1.5$ \\
& Euc-Mixed & $49.6$\tiny$\pm 1.1$ & $35.2$\tiny$\pm 3.4$ & $91.5$\tiny$\pm 0.1$ & $68.3$\tiny$\pm 2.3$ & $86.0$\tiny$\pm 1.3$ & $63.0$\tiny$\pm 0.3$ & $84.4$\tiny$\pm 0.2$ & $46.1$\tiny$\pm 0.4$ \\
& Hyp-Mixed & $55.1$\tiny$\pm 1.3$ & $56.9$\tiny$\pm 1.5$ & $93.3$\tiny$\pm 0.0$ & $69.6$\tiny$\pm 0.1$ & $83.8$\tiny$\pm 0.3$ & $\mathbf{73.9}$\tiny$\pm 0.2$ & $85.6$\tiny$\pm 0.5$ & $45.9$\tiny$\pm 0.3$ \\
\midrule
\parbox[t]{2mm}{\multirow{3}{*}{\rotatebox[origin=c]{90}{\scriptsize{NN}}}}
& MLP & $72.6$\tiny$\pm 0.6$ & $28.8$\tiny$\pm 2.5$ & $89.8$\tiny$\pm 0.5$ & $68.6$\tiny$\pm 0.6$ & $84.1$\tiny$\pm 0.9$ & $72.4$\tiny$\pm 0.2$ & $83.1$\tiny$\pm 0.5$ & $51.5$\tiny$\pm 1.0$ \\
& HNN \cite{ganea2018hyperbolic} & $75.1$\tiny$\pm 0.3$ & $41.0$\tiny$\pm 1.8$ & $90.8$\tiny$\pm 0.2$ & $80.5$\tiny$\pm 0.5$ & $\mathbf{94.9}$\tiny$\pm 0.1$ & $69.8$\tiny$\pm 0.4$ & $\mathbf{89.0}$\tiny$\pm 0.1$ & $\mathbf{54.6}$\tiny$\pm 0.4$ \\
& \textbf{RResNet Horo} & $\mathbf{98.4}$\tiny$\pm 0.3$ & $\mathbf{76.8}$\tiny$\pm 2.0$ & $\mathbf{95.2}$\tiny$\pm 0.1$ & $\mathbf{96.9}$\tiny$\pm 0.3$ & $\mathbf{95.0}$\tiny$\pm 0.3$ & $72.3$\tiny$\pm 1.7$ & $86.7$\tiny$\pm 6.3$ & $52.4$\tiny$\pm 5.5$ \\ 
\bottomrule
\end{tabular}}
\caption{Above we give graph task results for RResNet Horo compared with several non-graph-based neural network baselines (baseline methods and metrics are from \citet{chami2019hyperbolic}). Test ROC AUC is the metric reported for link prediction (LP) and test F1 score is the metric reported for node classification (NC). Mean and standard deviation are given over five trials. Note that RResNet Horo considerably outperforms HNN on the most hyperbolic datasets, performing worse and worse as hyperbolicity increases, to a more extreme extent than previous methods that do not adhere to geometry as closely (this is expected).}
\label{tab:hyperbolicmain}
\end{table*}

\vspace{-5pt}
\section{Experiments}
\vspace{-5pt}

\label{sec:experiments}

In this section, we perform a series of experiments to evaluate the effectiveness of RResNets on tasks arising on different manifolds. In particular, we explore hyperbolic space and the SPD manifold.

\vspace{-5pt}
\subsection{Hyperbolic Space}
\vspace{-5pt}
\label{sec:hypinit}

We perform numerous experiments in the hyperbolic setting. The purpose is twofold:

\begin{enumerate}[itemsep=0.5mm, leftmargin=20px]
    \item We wish to illustrate that our construction in Section \ref{sec:methodology} is not only more general, but also intrinsically more geometrically natural than pre-existing hyperbolic constructions such as HNN \cite{ganea2018hyperbolic}, and is thus able to learn better over hyperbolic data.
    \item We would like to highlight that non-Euclidean learning benefits the most hyperbolic datasets. We can do this directly since our method provides a way to vary the geometry of a fixed neural network architecture, thereby allowing us to directly investigate the effect of changing geometry from Euclidean to hyperbolic. 
\end{enumerate}

\subsubsection*{5.1.1\:\:\:\:Direct Comparison Against Hyperbolic Neural Networks \cite{ganea2018hyperbolic}}

To demonstrate the improvement of RResNet over HNN \cite{ganea2018hyperbolic}, we first perform node classification (NC) and link prediction (LP) tasks on graph datasets with low Gromov $\delta$-hyperbolicity \citep{chami2019hyperbolic}, which means the underlying structure of the data is highly hyperbolic. The RResNet model is given the name ``RResNet Horo." It utilizes a horosphere projection feature map-induced vector field described in Section \ref{sec:methodology}. All model details are given in Appendix \ref{sec:hyperbolicarch&traindetails}. We find that because we adhere well to the geometry, we attain good performance on datasets with low Gromov $\delta$-hyperbolicities (e.g. $\delta = 0, \delta = 1$). As soon as the Gromov hyperbolicity increases considerably beyond that (e.g. $\delta = 3.5, \delta = 11$), performance begins to degrade since we are embedding non-hyperbolic data in an unnatural manifold geometry. Since we adhere to the manifold geometry more strongly than prior hyperbolic work, we see performance decay faster as Gromov hyperbolicity increases, as expected. In particular, we test on the very hyperbolic Disease ($\delta = 0$) \citep{chami2019hyperbolic} and Airport ($\delta = 1$) \cite{chami2019hyperbolic} datasets. We also test on the considerably less hyperbolic PubMed ($\delta = 3.5$) \cite{Pubmed} and CoRA ($\delta = 11$) \cite{Cora} datasets. We use all of the non-graph-based baselines from \citet{chami2019hyperbolic}, since we wish to see how much we can learn strictly from a proper treatment of the embeddings (and no graph information). Table~\ref{tab:hyperbolicmain} summarizes the performance of ``RResNet Horo"  relative to these baselines.



Moreover, we find considerable benefit from the feature map-induced vector field over an embedded vector field that simply uses a Euclidean network to map from a manifold point embedded in $\mathbb{R}^n$. The horosphere projection captures geometry more accurately, and if we swap to an embedded vector field we see considerable accuracy drops on the two hardest hyperbolic tasks: Disease NC and Airport NC. In particular, for Disease NC the mean drops from $76.8$ to $75.0$, and for Airport NC we see a very large decrease from $96.9$ to $83.0$, indicating that geometry captured with a well-designed feature map is especially important. We conduct a more thorough vector field ablation study in Appendix \ref{sec:hypablation}.

\subsubsection*{5.1.2\:\:\:\:Impact of Geometry}


A major strength of our method is that it allows one to investigate the direct effect of geometry in obtaining results, since the architecture can remain the same for various manifolds and geometries (as specified by the metric of a given Riemannian manifold). This is well-illustrated in the most hyperbolic Disease NC setting, where swapping out hyperbolic for Euclidean geometry in an RResNet induced by an embedded vector field decreases the F1 score from a $75.0$ mean to a $67.3$ mean and induces a large amount of numerical stability, since standard deviation increases from $5.0$ to $21.0$. We conduct a more thorough geometry ablation study in Appendix \ref{sec:hypablation}. 

\begin{table*}
\vspace{-10px}
\centering
\scalebox{0.9}{
\begin{tabular}{lcccc}
\toprule
& AFEW\citep{Dhall2011StaticFE} & FPHA\citep{GarciaHernando2018FirstPersonHA} & NTU RGB+D\citep{shahroudy2016ntu} & HDM05\citep{HDM05}\\ 
\midrule
SPDNet & $33.24$\tiny$\pm 0.56$& $65.39$\tiny$ \pm 1.48$ & $41.47$\tiny$ \pm 0.34$ &  $66.77 $\tiny$\pm 0.92$\\
SPDNetBN & $35.39 $\tiny$\pm 0.93$& $65.03 $\tiny$\pm 1.35$ & $41.92$\tiny$\pm 0.37$ & $67.25 $\tiny$\pm 0.44$\\
\textbf{RResNet Affine-Invariant} & $35.17$ \tiny $\pm 1.78$&$\mathbf{66.53} $\tiny$\mathbf{\pm 01.64}$ & $41.00$ \tiny $\pm 0.50$& $67.91$ \tiny${\pm 1.27}$\\
\textbf{RResNet Log-Euclidean}&$\mathbf{36.38} $\tiny$\mathbf{\pm 1.29}$& $64.58 $\tiny$\pm 0.98$& $\mathbf{42.99 }$\tiny$\mathbf{\pm 0.23}$& $\mathbf{69.80}$\tiny$\mathbf{ \pm 1.51}$ \\
\bottomrule
\end{tabular}}
\caption{We run our SPD manifold RResNet on four SPD matrix datasets and compare against SPDNet \citep{Huang2017ARN} and SPDNet with batch norm \citep{Brooks2019RiemannianBN}. We report the mean and standard deviation of validation accuracies over five trials and bold which method performs the best.}
\label{tab:spd_table}
\vspace{-15px}
\end{table*}

\subsection{SPD Manifold}
A common application of SPD manifold-based models is learning over full-rank covariance matrices, which lie on the manifold of SPD matrices. We compare our  RResNet  to SPDNet \citep{Huang2017ARN} and SPDNet with batch norm \citep{Brooks2019RiemannianBN} on four video classification datasets: AFEW \citep{Dhall2011StaticFE}, FPHA \citep{GarciaHernando2018FirstPersonHA}, NTU RGB+D \citep{shahroudy2016ntu}, and HDM05 \citep{HDM05}.
Results are given in Table \ref{tab:spd_table}. Please see Appendix \ref{sec:spd_appendix} for details on the experimental setup. For our RResNet design, we try two different metrics: the log-Euclidean metric \citep{Gallier2020DifferentialGA} and the affine-invariant metric \citep{Cruceru2021ComputationallyTR,Pennec2005ARF}, each of which captures the curvature of the SPD manifold differently. We find that adding a learned residual improves performance and training dynamics over existing neural networks on SPD manifolds with little effect on runtime. We experiment with several vector field designs, which we outline in Appendix \ref{sec:vecfielddesign}. The best vector field design (given in Section \ref{sec:featuremap}), also the one we use for all SPD experiments, necessitates eigenvalue computation. We note the cost of computing eigenvalues is not a detrimental feature of our approach since previous works (SPDNet \cite{Huang2017ARN}, SPDNet with batchnorm \cite{Brooks2019RiemannianBN}) already make use of eigenvalue computation\footnote{One needs this computation for operations such as the Riemannian $\exp$ and $\log$ over the SPD manifold.}. Empirically, we observe that the beneficial effects of our RResNet construction are similar to those of the SPD batch norm introduced in \citet{Brooks2019RiemannianBN} (Table \ref{tab:spd_table}, Figure \ref{fig:spd_training} in Appendix \ref{sec:spd_appendix}). In addition, we find that our operations are stable with ill-conditioned input matrices, which commonly occur in the wild. To contrast, the batch norm computation in SPDNetBN, which relies on Karcher flow \citep{Karcher77,lou2020differentiating}, suffers from numerical instability when the input matrices are nearly singular. Overall, we observe our RResNet with the affine-invariant metric outperforms existing work on FPHA, and our RResNet using the log-Euclidean metric outperforms existing work on AFEW, NTU RGB+D, and HDM05. Being able to directly interchange between two metrics while maintaining the same neural network design is an unique strength of our model.

\begin{table*}
\centering
\scalebox{0.9}{\begin{tabular}{clcccc}
\toprule
& \textbf{Dataset} & Disease & Airport & PubMed & CoRA \\
& \textbf{Hyperbolicity} & $\delta = 0$ & $\delta = 1$ & $\delta = 3.5$ & $\delta = 11$ \\
\midrule
\parbox[t]{2mm}{\multirow{4}{*}{\rotatebox[origin=c]{90}{\scriptsize{GNN}}}} & GCN \cite{Kipf2017SemiSupervisedCW} & $69.7$\tiny$\pm 0.4$ & $81.4$\tiny$\pm 0.6$ & $78.1$\tiny$\pm 0.2$ & $81.3$\tiny$\pm 0.3$ \\
& GAT \cite{Velickovic2018GraphAN} & $70.4$\tiny$\pm 0.4$ & $81.5$\tiny$\pm 0.3$ & $79.0$\tiny$\pm 0.3$ & $\mathbf{83.0}$\tiny$\pm 0.7$ \\
& SAGE \cite{Hamilton2017InductiveRL} & $69.1$\tiny$\pm 0.6$ & $82.1$\tiny$\pm 0.5$ & $77.4$\tiny$\pm 2.2$ & $77.9$\tiny$\pm 2.4$ \\
& SGC \cite{Wu2019SimplifyingGC} & $69.5$\tiny$\pm 0.2$ & $80.6$\tiny$\pm 0.1$ & $78.9$\tiny$\pm 0.0$ & $81.0$\tiny$\pm 0.1$ \\
\midrule
\parbox[t]{2mm}{\multirow{3}{*}{\rotatebox[origin=c]{90}{\scriptsize{GGNN}}}} & HGCN \cite{chami2019hyperbolic} & $74.5$\tiny$\pm 0.9$ & $90.6$\tiny$\pm 0.2$ & $\mathbf{80.3}$\tiny$\pm 0.3$ & $79.9$\tiny$\pm 0.2$ \\
& Fully HNN \cite{chen-etal-2022-fully} & $\mathbf{96.0}$\tiny$\pm 1.0$ & $90.9$\tiny$\pm 1.4$ & $78.0$\tiny$\pm 1.0$ & $80.2$\tiny$\pm 1.3$ \\
& \textbf{G-RResNet Horo} & $\mathbf{95.4}$\tiny$\pm 1.0$ & $\mathbf{97.4}$\tiny$\pm 0.1$ & $75.5$\tiny$\pm 0.8$ & $64.4$\tiny$\pm 7.6$ \\
\bottomrule
\end{tabular}}
\caption{Above we give node classification results for G-RResNet Horo compared with several graph-based neural network baselines (baseline methods and metrics are from \citet{chami2019hyperbolic}). Test F1 score is the metric reported. Mean and standard deviation are given over five trials. Note that G-RResNet Horo obtains a state-of-the-art result on Airport. As for the less hyperbolic datasets, G-RResNet Horo does worse on PubMed and does very poorly on CoRA, once more, as expected due to unsuitability of geometry. The GNN label stands for ``Graph Neural Networks" and the GGNN label stands for ``Geometric Graph Neural Networks."}
\label{tab:hyperbolicgraph}
\vspace{-10pt}
\end{table*}

\vspace{-10pt}
\section{Riemannian Residual Graph Neural Networks}
\vspace{-7pt}


Following the initial comparison to non-graph-based methods in Table \ref{tab:hyperbolicmain}, we introduce a simple graph-based method by modifying RResNet Horo above. We take the previous model and pre-multiply the feature map output by the underlying graph adjacency matrix $A$ in a manner akin to what happens with graph neural networks \cite{Wu2019SimplifyingGC}. This is the simple modification that we introduce to the Riemannian ResNet to incorporate graph information; we call this method G-RResNet Horo. We compare directly against the graph-based methods in \citet{chami2019hyperbolic} as well as against Fully Hyperbolic Neural Networks \cite{chen-etal-2022-fully} and give results in Table~\ref{tab:hyperbolicgraph}. We test primarily on node classification since we found that almost all LP tasks are too simple and solved by methods in \citet{chami2019hyperbolic} (i.e., test ROC is greater than $95\%$). We also tune the matrix power of $A$ for a given dataset; full architectural details are given in Appendix \ref{sec:hyperbolicarch&traindetails}. Although this method is simple, we see further improvement and in fact attain a state-of-the-art result for the Airport \cite{chami2019hyperbolic} dataset. Once more, as expected, we see a considerable performance drop for the much less hyperbolic datasets, PubMed and CoRA.

\vspace{-10pt}
\section{Conclusion}
\vspace{-8pt}

We propose a general construction of residual neural networks on Riemannian manifolds. Our approach is a natural geodesically-oriented generalization that can be applied more broadly than previous manifold-specific work. Our introduced neural network construction is the first that decouples geometry (i.e. the representation space expected for input to layers) from the architecture design (i.e. actual “wiring” of the layers).
Moreover, we introduce a geometrically principled feature map-induced vector field design for the RResNet. We demonstrate that our methodology better captures underlying geometry than existing manifold-specific neural network constructions. On a variety of tasks such as node classification, link prediction, and covariance matrix classification, our method outperforms previous work. Finally, our RResNet's principled construction allows us to directly assess the effect of geometry on a task, with neural network architecture held constant. We illustrate this by directly comparing the performance of two Riemannian metrics on the manifold of SPD matrices.
We hope others will use our work to better learn over data with nontrivial geometries in relevant fields, such as lattice quantum field theory, robotics, and computational chemistry. 

\textbf{Limitations} We rely fundamentally on knowledge of geodesics of the underlying manifold. As such, we assume that a closed form (or more generally, easily computable, differentiable form) is given for the Riemannian exponential map as well as for the tangent spaces.

\section*{Acknowledgements}

We would like to thank Facebook AI for funding equipment that made this work possible. In addition, we thank the National Science Foundation for awarding Prof. Christopher De Sa a grant that helps fund this research effort (NSF IIS-2008102) and for supporting both Isay Katsman and Aaron Lou with graduate research fellowships. We would also like to acknowledge Prof. David Bindel for his useful insights on the numerics of SPD matrices.

\bibliography{rresnet}

\begin{thebibliography}{55}
\providecommand{\natexlab}[1]{#1}
\providecommand{\url}[1]{\texttt{#1}}
\expandafter\ifx\csname urlstyle\endcsname\relax
  \providecommand{\doi}[1]{doi: #1}\else
  \providecommand{\doi}{doi: \begingroup \urlstyle{rm}\Url}\fi

\bibitem[Anil et~al.(2019)Anil, Lucas, and Grosse]{Anil2019SortingOL}
Cem Anil, James Lucas, and Roger~B. Grosse.
\newblock Sorting out lipschitz function approximation.
\newblock In \emph{ICML}, 2019.

\bibitem[Bhatia(2007)]{Bhatia2007PositiveDM}
Rajendra Bhatia.
\newblock Geometry of positive matrices.
\newblock In \emph{Positive Definite Matrices}, pages 201--236. Princeton University Press, 2007.
\newblock ISBN 9780691129181.
\newblock URL \url{http://www.jstor.org/stable/j.ctt7rxv2.9}.

\bibitem[Bose et~al.(2020)Bose, Smofsky, Liao, Panangaden, and Hamilton]{bose2020latent}
Joey Bose, Ariella Smofsky, Renjie Liao, Prakash Panangaden, and Will Hamilton.
\newblock Latent variable modelling with hyperbolic normalizing flows.
\newblock In \emph{Proceedings of the 37th International Conference on Machine Learning}, pages 1045--1055, 2020.

\bibitem[Bridson and Haefliger(1999)]{Bridson1999MetricSO}
Martin~R. Bridson and Andr'e Haefliger.
\newblock Metric spaces of non-positive curvature.
\newblock 1999.

\bibitem[Brooks et~al.(2019)Brooks, Schwander, Barbaresco, Schneider, and Cord]{Brooks2019RiemannianBN}
Daniel~A. Brooks, Olivier Schwander, Fr{\'e}d{\'e}ric Barbaresco, Jean-Yves Schneider, and Matthieu Cord.
\newblock Riemannian batch normalization for spd neural networks.
\newblock In \emph{NeurIPS}, 2019.

\bibitem[Casado and Mart{\'i}nez-Rubio(2019)]{Casado2019CheapOC}
Mario~Lezcano Casado and David Mart{\'i}nez-Rubio.
\newblock Cheap orthogonal constraints in neural networks: A simple parametrization of the orthogonal and unitary group.
\newblock \emph{ArXiv}, abs/1901.08428, 2019.

\bibitem[Chakraborty et~al.(2018)Chakraborty, Bouza, Manton, and Vemuri]{Chakraborty2018ManifoldNetAD}
Rudrasis Chakraborty, Jose~J. Bouza, Jonathan~H. Manton, and Baba~C. Vemuri.
\newblock Manifoldnet: A deep neural network for manifold-valued data with applications.
\newblock \emph{IEEE Transactions on Pattern Analysis and Machine Intelligence}, 44:\penalty0 799--810, 2018.

\bibitem[Chami et~al.(2019)Chami, Ying, R{\'e}, and Leskovec]{chami2019hyperbolic}
Ines Chami, Zhitao Ying, Christopher R{\'e}, and Jure Leskovec.
\newblock Hyperbolic graph convolutional neural networks.
\newblock In \emph{Advances in neural information processing systems}, pages 4868--4879, 2019.

\bibitem[Chen et~al.(2018)Chen, Rubanova, Bettencourt, and Duvenaud]{chen2018neural}
Ricky T.~Q. Chen, Yulia Rubanova, Jesse Bettencourt, and David~K Duvenaud.
\newblock Neural ordinary differential equations.
\newblock In \emph{Advances in Neural Information Processing Systems}, volume~31, pages 6571--6583, 2018.

\bibitem[Chen et~al.(2021)Chen, Han, Lin, Zhao, Liu, Li, Sun, and Zhou]{chen-etal-2022-fully}
Weize Chen, Xu~Han, Yankai Lin, Hexu Zhao, Zhiyuan Liu, Peng Li, Maosong Sun, and Jie Zhou.
\newblock Fully hyperbolic neural networks.
\newblock In \emph{Proceedings of the 60th Annual Meeting of the Association for Computational Linguistics (Volume 1: Long Papers)}, 2021.

\bibitem[Cohen et~al.(2018)Cohen, Geiger, Köhler, and Welling]{cohen2018spherical}
Taco~S. Cohen, Mario Geiger, Jonas Köhler, and Max Welling.
\newblock Spherical {CNN}s.
\newblock In \emph{International Conference on Learning Representations}, 2018.

\bibitem[Cruceru et~al.(2021)Cruceru, B'ecigneul, and Ganea]{Cruceru2021ComputationallyTR}
Calin Cruceru, Gary B'ecigneul, and Octavian-Eugen Ganea.
\newblock Computationally tractable riemannian manifolds for graph embeddings.
\newblock In \emph{AAAI}, 2021.

\bibitem[Dhall et~al.(2011)Dhall, G{\"o}cke, Lucey, and Gedeon]{Dhall2011StaticFE}
Abhinav Dhall, Roland G{\"o}cke, Simon Lucey, and Tom Gedeon.
\newblock Static facial expression analysis in tough conditions: Data, evaluation protocol and benchmark.
\newblock \emph{2011 IEEE International Conference on Computer Vision Workshops (ICCV Workshops)}, pages 2106--2112, 2011.

\bibitem[El~Ghaoui(2021)]{hyperplanes}
Laurent El~Ghaoui.
\newblock Hyper-textbook: Optimization models and applications, 2021.
\newblock URL \url{https://inst.eecs.berkeley.edu/~ee127/sp21/livebook/l_vecs_hyp.html}.

\bibitem[Falorsi and Forr{\'e}(2020)]{Falorsi2020NeuralOD}
Luca Falorsi and Patrick Forr{\'e}.
\newblock Neural ordinary differential equations on manifolds.
\newblock \emph{arXiv preprint arXiv:2006.06663}, 2020.

\bibitem[Gallier and Quaintance(2020)]{Gallier2020DifferentialGA}
Jean Gallier and Jocelyn Quaintance.
\newblock \emph{Differential Geometry and Lie Groups: A Computational Perspective}, volume~12.
\newblock Springer, 2020.

\bibitem[Ganea et~al.(2018)Ganea, B{\'e}cigneul, and Hofmann]{ganea2018hyperbolic}
Octavian Ganea, Gary B{\'e}cigneul, and Thomas Hofmann.
\newblock Hyperbolic neural networks.
\newblock In \emph{Advances in neural information processing systems}, pages 5345--5355, 2018.

\bibitem[Garcia-Hernando et~al.(2018)Garcia-Hernando, Yuan, Baek, and Kim]{GarciaHernando2018FirstPersonHA}
Guillermo Garcia-Hernando, Shanxin Yuan, Seungryul Baek, and Tae-Kyun Kim.
\newblock First-person hand action benchmark with rgb-d videos and 3d hand pose annotations.
\newblock \emph{2018 IEEE/CVF Conference on Computer Vision and Pattern Recognition}, pages 409--419, 2018.

\bibitem[Goodfellow et~al.(2015)Goodfellow, Bengio, and Courville]{Goodfellow2015DeepL}
Ian~J. Goodfellow, Yoshua Bengio, and Aaron~C. Courville.
\newblock Deep learning.
\newblock \emph{Nature}, 521:\penalty0 436--444, 2015.

\bibitem[G{\"u}lçehre et~al.(2019)G{\"u}lçehre, Denil, Malinowski, Razavi, Pascanu, Hermann, Battaglia, Bapst, Raposo, Santoro, and de~Freitas]{Glehre2019HyperbolicAN}
Caglar G{\"u}lçehre, Misha Denil, Mateusz Malinowski, Ali Razavi, Razvan Pascanu, Karl~Moritz Hermann, Peter~W. Battaglia, Victor Bapst, David Raposo, Adam Santoro, and Nando de~Freitas.
\newblock Hyperbolic attention networks.
\newblock \emph{ArXiv}, abs/1805.09786, 2019.

\bibitem[Haber and Ruthotto(2017)]{Haber2017StableAF}
Eldad Haber and Lars Ruthotto.
\newblock Stable architectures for deep neural networks.
\newblock \emph{Inverse Problems}, 34, 2017.

\bibitem[Hamilton et~al.(2017)Hamilton, Ying, and Leskovec]{Hamilton2017InductiveRL}
William~L. Hamilton, Zhitao Ying, and Jure Leskovec.
\newblock Inductive representation learning on large graphs.
\newblock In \emph{NIPS}, 2017.

\bibitem[He et~al.(2016)He, Zhang, Ren, and Sun]{He2016DeepRL}
Kaiming He, X.~Zhang, Shaoqing Ren, and Jian Sun.
\newblock Deep residual learning for image recognition.
\newblock \emph{2016 IEEE Conference on Computer Vision and Pattern Recognition (CVPR)}, pages 770--778, 2016.

\bibitem[Heintze and Hof(1977)]{Heintze1977GeometryOH}
Ernst Heintze and H~I Hof.
\newblock Geometry of horospheres.
\newblock \emph{Journal of Differential Geometry}, 12:\penalty0 481--491, 1977.

\bibitem[Helgason(1965)]{Helgason1965RadonFourierTO}
Sigurdur Helgason.
\newblock Radon-fourier transforms on symmetric spaces and related group representations.
\newblock \emph{Bulletin of the American Mathematical Society}, 71:\penalty0 757--763, 1965.

\bibitem[Huang and Gool(2017)]{Huang2017ARN}
Zhiwu Huang and Luc~Van Gool.
\newblock A riemannian network for spd matrix learning.
\newblock In \emph{AAAI}, 2017.

\bibitem[Hussein et~al.(2013)Hussein, Torki, Gowayyed, and El-Saban]{Hussein2013HumanAR}
Mohamed~E. Hussein, Marwan Torki, Mohammad~Abdelaziz Gowayyed, and Motaz~Ahmad El-Saban.
\newblock Human action recognition using a temporal hierarchy of covariance descriptors on 3d joint locations.
\newblock In \emph{IJCAI}, 2013.

\bibitem[Karcher(1977)]{Karcher77}
H.~Karcher.
\newblock Riemannian center of mass and mollifier smoothing.
\newblock \emph{Communications on Pure and Applied Mathematics}, 30\penalty0 (5):\penalty0 509--541, 1977.
\newblock \doi{https://doi.org/10.1002/cpa.3160300502}.
\newblock URL \url{https://onlinelibrary.wiley.com/doi/abs/10.1002/cpa.3160300502}.

\bibitem[Katsman et~al.(2021)Katsman, Lou, Lim, Jiang, Lim, and Sa]{Katsman2021EquivariantMF}
Isay Katsman, Aaron Lou, Derek Lim, Qingxuan Jiang, Ser-Nam Lim, and Christopher~De Sa.
\newblock Equivariant manifold flows.
\newblock \emph{ArXiv}, abs/2107.08596, 2021.

\bibitem[Kingma and Ba(2015)]{kingma2014adam}
Diederik~P Kingma and Jimmy Ba.
\newblock Adam: {A} method for stochastic optimization.
\newblock In \emph{International Conference on Learning Representations}, 2015.

\bibitem[Kipf and Welling(2017)]{Kipf2017SemiSupervisedCW}
Thomas Kipf and Max Welling.
\newblock Semi-supervised classification with graph convolutional networks.
\newblock \emph{ArXiv}, abs/1609.02907, 2017.

\bibitem[Lee(1997)]{Lee1997RiemannianMA}
John~M. Lee.
\newblock \emph{Riemannian Manifolds: An Introduction to Curvature}.
\newblock 1997.

\bibitem[Lee(2013)]{lee2013introduction}
John~M Lee.
\newblock \emph{Introduction to Smooth Manifolds}.
\newblock Graduate Texts in Mathematics. Springer New York, 2013.

\bibitem[L{\'o}pez et~al.(2021)L{\'o}pez, Pozzetti, Trettel, Strube, and Wienhard]{Lopez2021VectorvaluedDA}
F.~Javier L{\'o}pez, B{\'e}atrice Pozzetti, Steve~J. Trettel, Michael Strube, and Anna Wienhard.
\newblock Vector-valued distance and gyrocalculus on the space of symmetric positive definite matrices.
\newblock \emph{ArXiv}, abs/2110.13475, 2021.

\bibitem[Lou et~al.(2020{\natexlab{a}})Lou, Katsman, Jiang, Belongie, Lim, and De~Sa]{lou2020differentiating}
Aaron Lou, Isay Katsman, Qingxuan Jiang, Serge Belongie, Ser-Nam Lim, and Christopher De~Sa.
\newblock Differentiating through the fr\'echet mean.
\newblock In \emph{International Conference on Machine Learning}, 2020{\natexlab{a}}.

\bibitem[Lou et~al.(2020{\natexlab{b}})Lou, Lim, Katsman, Huang, Jiang, Lim, and De~Sa]{lou2020neural}
Aaron Lou, Derek Lim, Isay Katsman, Leo Huang, Qingxuan Jiang, Ser-Nam Lim, and Christopher De~Sa.
\newblock Neural manifold ordinary differential equations.
\newblock In \emph{Advances in Neural Information Processing Systems}, volume~33, pages 17548--17558, 2020{\natexlab{b}}.

\bibitem[Lu et~al.(2017)Lu, Zhong, Li, and Dong]{Lu2017BeyondFL}
Yiping Lu, Aoxiao Zhong, Quanzheng Li, and Bin Dong.
\newblock Beyond finite layer neural networks: Bridging deep architectures and numerical differential equations.
\newblock \emph{ArXiv}, abs/1710.10121, 2017.

\bibitem[Mathieu and Nickel(2020)]{mathieu2020riemannian}
Emile Mathieu and Maximilian Nickel.
\newblock Riemannian continuous normalizing flows.
\newblock In \emph{Advances in Neural Information Processing Systems}, volume~33, pages 2503--2515, 2020.

\bibitem[M\"{u}ller et~al.(2007)M\"{u}ller, R\"{o}der, Clausen, Eberhardt, Kr\"{u}ger, and Weber]{HDM05}
M.~M\"{u}ller, T.~R\"{o}der, M.~Clausen, B.~Eberhardt, B.~Kr\"{u}ger, and A.~Weber.
\newblock Documentation mocap database hdm05.
\newblock Technical Report CG-2007-2, Universit\"{a}t Bonn, June 2007.

\bibitem[Nair and Hinton(2010)]{Nair2010RectifiedLU}
Vinod Nair and Geoffrey~E. Hinton.
\newblock Rectified linear units improve restricted boltzmann machines.
\newblock In \emph{ICML}, 2010.

\bibitem[Nickel and Kiela(2017)]{nickel2017poincare}
Maximillian Nickel and Douwe Kiela.
\newblock Poincar{\'e} embeddings for learning hierarchical representations.
\newblock In \emph{Advances in neural information processing systems}, pages 6338--6347, 2017.

\bibitem[Pennec et~al.(2005)Pennec, Fillard, and Ayache]{Pennec2005ARF}
Xavier Pennec, Pierre Fillard, and Nicholas Ayache.
\newblock A riemannian framework for tensor computing.
\newblock \emph{International Journal of Computer Vision}, 66:\penalty0 41--66, 2005.

\bibitem[Rezende et~al.(2020)Rezende, Papamakarios, Racaniere, Albergo, Kanwar, Shanahan, and Cranmer]{rezende2020normalizing}
Danilo~Jimenez Rezende, George Papamakarios, Sebastien Racaniere, Michael Albergo, Gurtej Kanwar, Phiala Shanahan, and Kyle Cranmer.
\newblock Normalizing flows on tori and spheres.
\newblock In \emph{Proceedings of the 37th International Conference on Machine Learning}, pages 8083--8092, 2020.

\bibitem[Rozen et~al.(2021)Rozen, Grover, Nickel, and Lipman]{Rozen2021MoserFD}
Noam Rozen, Aditya Grover, Maximilian Nickel, and Yaron Lipman.
\newblock Moser flow: Divergence-based generative modeling on manifolds.
\newblock In \emph{Neural Information Processing Systems}, 2021.

\bibitem[Ruthotto and Haber(2018)]{Ruthotto2018DeepNN}
Lars Ruthotto and Eldad Haber.
\newblock Deep neural networks motivated by partial differential equations.
\newblock \emph{Journal of Mathematical Imaging and Vision}, 62:\penalty0 352--364, 2018.

\bibitem[Sen et~al.(2008{\natexlab{a}})Sen, Namata, Bilgic, Getoor, Gallagher, and Eliassi-Rad]{Cora}
Prithviraj Sen, Galileo Namata, Mustafa Bilgic, Lise Getoor, Brian Gallagher, and Tina Eliassi-Rad.
\newblock Collective classification in network data articles.
\newblock \emph{AI Magazine}, 29:\penalty0 93--106, 09 2008{\natexlab{a}}.
\newblock \doi{10.1609/aimag.v29i3.2157}.

\bibitem[Sen et~al.(2008{\natexlab{b}})Sen, Namata, Bilgic, Getoor, Gallagher, and Eliassi-Rad]{Pubmed}
Prithviraj Sen, Galileo Namata, Mustafa Bilgic, Lise Getoor, Brian Gallagher, and Tina Eliassi-Rad.
\newblock Collective classification in network data.
\newblock \emph{AI Mag.}, 29:\penalty0 93--106, 2008{\natexlab{b}}.

\bibitem[Shahroudy et~al.(2016)Shahroudy, Liu, Ng, and Wang]{shahroudy2016ntu}
Amir Shahroudy, Jun Liu, Tian-Tsong Ng, and Gang Wang.
\newblock Ntu rgb+d: A large scale dataset for 3d human activity analysis.
\newblock In \emph{Proceedings of the IEEE conference on computer vision and pattern recognition}, pages 1010--1019, 2016.

\bibitem[Shimizu et~al.(2020)Shimizu, Mukuta, and Harada]{shimizu2020hyperbolic}
Ryohei Shimizu, Yusuke Mukuta, and Tatsuya Harada.
\newblock Hyperbolic neural networks++, 2020.

\bibitem[Sonoda et~al.(2022)Sonoda, Ishikawa, and Ikeda]{Sonoda2022FullyConnectedNO}
Sho Sonoda, Isao Ishikawa, and Masahiro Ikeda.
\newblock Fully-connected network on noncompact symmetric space and ridgelet transform based on helgason-fourier analysis.
\newblock \emph{ArXiv}, abs/2203.01631, 2022.

\bibitem[Ungar(2009)]{Ungar2009AGS}
Abraham~Albert Ungar.
\newblock A gyrovector space approach to hyperbolic geometry.
\newblock In \emph{A Gyrovector Space Approach to Hyperbolic Geometry}, 2009.

\bibitem[Velickovic et~al.(2018)Velickovic, Cucurull, Casanova, Romero, Lio’, and Bengio]{Velickovic2018GraphAN}
Petar Velickovic, Guillem Cucurull, Arantxa Casanova, Adriana Romero, Pietro Lio’, and Yoshua Bengio.
\newblock Graph attention networks.
\newblock \emph{ArXiv}, abs/1710.10903, 2018.

\bibitem[Weinan(2017)]{Weinan2017APO}
W.~Weinan.
\newblock A proposal on machine learning via dynamical systems.
\newblock \emph{Communications in Mathematics and Statistics}, 5\penalty0 (1):\penalty0 1--11, March 2017.
\newblock ISSN 2194-6701.
\newblock \doi{10.1007/s40304-017-0103-z}.

\bibitem[Wu et~al.(2019)Wu, Zhang, de~Souza, Fifty, Yu, and Weinberger]{Wu2019SimplifyingGC}
Felix Wu, Tianyi Zhang, Amauri~H. de~Souza, Christopher Fifty, Tao Yu, and Kilian~Q. Weinberger.
\newblock Simplifying graph convolutional networks.
\newblock \emph{ArXiv}, abs/1902.07153, 2019.

\bibitem[Yu and De~Sa(2022)]{Yu2022HyLaHL}
Tao Yu and Christopher~M De~Sa.
\newblock Hyla: Hyperbolic laplacian features for graph learning.
\newblock \emph{ArXiv}, abs/2202.06854, 2022.

\end{thebibliography}
\bibliographystyle{plainnat}





\appendix

\newpage
{\LARGE\textbf{Appendix}}

\appendix
\section{Riemannian Geometry: Relevant Reference Material}
\label{sec:extended_background}

Here we give some relevant reference material that provides the reader with the fundamental operations used for the Poincar\'e ball model of hyperbolic space, as well as the two Riemannian SPD manifold structures employed.

\subsection{The Poincar\'e Ball Model}

Hyperbolic space can be represented via several isometric models. We use the Poincar\'e ball model, which is defined by the set 
\begin{equation}
    \Big\{ x \in \mathbb{R}^n \mid \|x\|_2^2 < -\frac{1}{K}\Big\},
\end{equation}
where $K < 0$ is the space's constant negative curvature together with the metric given in the table below. We give a summary of hyperbolic operations in Table~\ref{tab:hyp_operations}.

\begin{table*}[h]
\centering
\renewcommand{\arraystretch}{1.5}
\begin{tabular}{c|ccc}
 \toprule
\textbf{Manifold} & \textbf{Euclidean} $\mathbb{R}^n$ &  \textbf{ Poincar\'e Ball} $\mathbb{H}^n$\\
 \midrule
 Dimension, $\text{dim}(\M)$ &  $n$  & $n$ \\
 Metric $g_x$,& $g^{\mathbb{E}}$ & $(\lambda_x^K)g^{\mathbb{E}}$, where $g^{\mathbb{E}} = I$ \\
 Tangent Space, $T_x\M$ & $\mathbb{R}^n$ & $\mathbb{R}^n$\\
  Projection, $\text{proj}_{T_x\M}(v)$&  $v$ & $v$  \\
Exp Map, $\exp_x(v)$ & $ x + v$ & $ x \oplus_K \left(\tanh \left(\sqrt{|K|}\frac{\lambda_x^K\|v\|_2}{2}\right)\frac{v}{\sqrt{|K|}\|v\|_2}\right)$\\
Geodesic Distance, $d(x, y)$& $\|y-x\|_2$ &  $\frac{1}{\sqrt{|K|}} \cosh^{-1}\left(1-\frac{2K\|x-y\|_2^2}{(1 + K\|x\|_2^2)(1 + K\|y\|_2^2})\right)$  \\
\bottomrule

\end{tabular}
\caption{Summary of Poincar\'e ball operations. We provide equivalent operations on Euclidean space for reference. $\oplus_K$ denotes M\"{o}bius addition~\citep{Ungar2009AGS}, and $\lambda_x^K = \frac{2}{1 + K\|x\|_2^2}$, a conformal factor. }
\label{tab:hyp_operations}
\end{table*}

\subsection{The SPD Manifold}
We provide a summary of operations on the manifold of SPD matrices, in Table \ref{tab:operations}. For the SPD manifold, we illustrate the differences between the affine-invariant and log-Euclidean metrics. $\exp$ and $\log$ denote the matrix exponential and logarithm, respectively. 
\begin{table*}[h]
\centering
\renewcommand{\arraystretch}{1.5}
\scalebox{0.9}{\begin{tabular}{c|ccc}
 \toprule
\textbf{Manifold} & \textbf{Euclidean} $\mathbb{R}^n$ &  $\textbf{SPD}(n)$ Affine-Invariant &$\textbf{SPD}(n)$ Log-Euclidean  \\
 \midrule
 Dimension, $\text{dim}(\M)$ &  $n$  & $\frac{n(n+1)}{2}$ & $\frac{n(n+1)}{2}$\\
 Metric $g_x$, & $g^\mathbb{E}$ & $\text{tr}(X^{-1}UX^{-1}V)$ & $\text{tr}((D\log_X(U))^TD\log_X(V))$ \\
 Tangent Space, $T_x\M$ & $\mathbb{R}^n$ &  $\{V \mid V = V^T\}$ & $ \{V \mid V = V^T\}$\\
  Projection, $\text{proj}_{T_x\M}(v)$&  $v$  & $\frac{V + V^T}{2}$ &  $ \frac{V + V^T}{2}$\\
Exp Map, $\exp_x(v)$ & $ x + v$ & $ X\exp(X^{-1}V)$& $\exp(\log(X) + V)$\\
Geodesic Distance, $d(x, y)$& $ \|y - x\|_2$  & $ \|\log(X^{-1}Y)\|_F$& $\|\log(Y) - \log(X)\|_F$\\
\bottomrule

\end{tabular}}
\caption{Summary of SPD operations. We provide equivalent operations on Euclidean space for reference. We use both the affine-invariant and log-Euclidean metrics. }
\label{tab:operations}
\end{table*}

\section{Vector Field Design}
\label{sec:vecfielddesign}

Recall from the main paper that we can design a neural network-parameterized vector field $\ell_i : \M \rightarrow T\M$ for an embedded manifold $\M$ of dimension $D$, simply by defining a standard neural network $n_i : \mathbb{R}^D \rightarrow \mathbb{R}^D$ and then setting:
\begin{equation}
\ell_i(x) := \text{proj}_{T_x \M} (n_i(x)).
\end{equation}

Though this vector field design is frequently trivial (assuming the manifold has a natural embedding in $\mathbb{R}^n$), it may be highly inefficient if an easy-to-implement but suboptimal embedding is used. This is especially the case if manifold structure is underexploited in the construction of such an embedding (see Section \ref{sec:general}). In this section, we give a natural embedded vector field design for hyperbolic space, a more geometric feature map-induced vector field design for hyperbolic space, and explore a variety of possible vector field designs for the SPD manifold. In the general setting, note that obtaining a parsimonious (with respect to either representational dimension or parameter count) vector field design that is sufficiently expressive is nontrivial.

\subsection{Vector Field Design for Hyperbolic Space}

For the embedded hyperbolic vector field design, we apply the general design construction referenced above. Note that $\mathbb{H}^n$ is an $n$-dimensional manifold with a trivial $\mathbb{R}^{n+1}$ embedding given by any coordinate representation. Thus we need only parameterize a neural network $n_i : \mathbb{R}^{n+1} \rightarrow \mathbb{R}^{n+1}$ and set
\begin{equation}
\ell_i(x) = \text{proj}_{T_x \mathbb{H}^n} (n_i (x))
\end{equation}

to obtain our neural network-parameterized vector fields. Observe that this vector field design is efficient and expressive, since $T_x \mathbb{H}^n \cong \mathbb{R}^{n}$, but is perhaps too expressive in that the vector field is not constructed around the geodesic geometry of hyperbolic space. For this, we employ the horosphere projection-induced vector field design introduced in Section \ref{sec:featuremap} of the main paper. We simply fix a number of horospheres, randomly initialize them, and then further learn hyperparameters specifying a given horosphere. 

\subsection{Vector Field Design for the SPD Manifold}
\label{sec:a2}

Let $SPD(n)$ be the manifold of $n \times n$ SPD matrices with canonical metric, as in the main paper. We recall from \citet{Gallier2020DifferentialGA} that $SPD$ has a Lie structure with algebra consisting of $n\times n$ symmetric matrices, denoted $S(n)$. The Riemannian exponential map (or equivalently, the matrix exponential map) is a bijection between $S(n)$ and $SPD(n)$. Recall by Lie symmetry \citep{Gallier2020DifferentialGA} that the tangent space at $X \in SPD(n)$ is given by:
\begin{equation}
T_X SPD(n) = S(n) := \{P \mid P  = P^T\}.
\end{equation}

Observe that due to this tangent space structure, instead of utilizing the vector field construction given in Section \ref{sec:general} that requires an explicit projection operator, we may opt for more amenable designs oriented around the SPD manifold's Lie structure. We develop a variety of constructions below.

\textbf{B.2.1 \:\: Design 1: Embedded}

We can observe that $SPD(n)$ is trivially embedded in $\mathbb{R}^{n^2}$, and so are its tangent vectors; we will use this observation to construct a simple vector field parameterization. Let $\text{vec}: \mathbb{R}^{n \times n} \rightarrow \mathbb{R}^{n^2}$ be row-major matrix vectorization and let $\text{vec}^{-1} : \mathbb{R}^{n^2} \rightarrow \mathbb{R}^{n \times n}$ be its inverse. Given a neural network $n_i : \mathbb{R}^{n^2} \rightarrow \mathbb{R}^{n^2}$ and an $X \in SPD(n)$, we may set:
\begin{equation}
\ell_i(X) = \text{proj}_{T_XSPD(n)}(\text{vec}^{-1} (n_i(\text{vec}(X))))
\end{equation}

where $\text{proj}_{T_XSPD(n)}$ symmetrizes a matrix in the tangent space of the identity matrix, before transforming it back to the tangent space of $X$. It is given by:

\begin{equation}
\text{proj}_{T_XSPD(n)} (V) = \frac{V + V^T}{2}.
\end{equation}

Although this vector field representation is expressive, it also provides unneeded flexibility. For example, the intrinsic dimension of $T_X SPD(n) \cong S(n)$ is $\frac{n(n+1)}{2}$, but the $n_i$ map to all of $\mathbb{R}^{n^2}$. Based on this observation, we exploit tangent vector structure in the following vector field design to retain expressiveness while increasing efficiency.

\textbf{B.2.2 \:\: Design 2: Structured}

Observe that our tangent spaces satisfy $T_X SPD(n) \cong S(n)$, and moreover that $SPD(n) \subset S(n)$. We know that $S(n)$ has dimension $\frac{n(n+1)}{2}$ since each symmetric matrix is uniquely determined by its upper triangular part. Let $\iota : \mathbb{R}^{\frac{n(n+1)}{2}} \xhookrightarrow{} S(n)$ be the row-major injection of the upper triangular part into a symmetric matrix and let $\iota^{-1} : S(n) \twoheadrightarrow \mathbb{R}^{\frac{n(n+1)}{2}}$ be its inverse. Given a neural network $n_i : \mathbb{R}^{\frac{n(n+1)}{2}} \rightarrow \mathbb{R}^{\frac{n(n+1)}{2}}$ and an $X \in SPD(n)$, we may set:
\begin{equation}
\ell_i(X) =  \iota (n_i (\iota^{-1}(X))).
\end{equation}

Note that there is no longer any need for a projection to symmetric matrices, since we incorporate this structure directly into our vector field design. Moreover note that since $T_X SPD(n) \cong S(n) \cong \mathbb{R}^{\frac{n(n+1)}{2}}$, this vector field design is maximally expressive while being maximally efficient (representationally).

\textbf{B.2.3 \:\: Design 3: Parsimonious}

Although Design 2 is maximally expressive and efficient, in some cases where expressivity is less of a concern we may want a a reasonable parsimonious vector field design. Our answer to this is to directly parameterize a symmetric matrix via its upper triangular portion. To be explicit, let our vector field be parameterized by euclidean parameters $v \in \mathbb{R}^{\frac{n(n+1)}{2}}$ and, for $X \in SPD(n)$, be given by:
\begin{equation}
\ell_i(X) = \iota(v)
\end{equation}

This is a learnable vector field induced by a single tangent vector. Although highly efficient, its location-agnosticism makes it highly inexpressive.

\textbf{B.2.4 \:\: Design 4: Parsimonious Spectral}

One may also consider exploiting manifold-specific structure in the context of Design 3 to produce a more expressive vector field that remains fairly efficient parametrically. A vector field design that accomplishes this is one that allows a map from the spectrum of the local SPD matrix to the spectrum of the symmetric matrix in the vector field construction. We let $\text{spec}: SPD(n) \rightarrow \mathbb{R}^n$ be the spectral map that takes SPD matrices to a vector of their eigenvalues, sorted in descending order. To be explicit, let our vector field be parameterized by $Q \in O(n)$\footnote{$O(n)$ is the group of orthogonal matrices.}, a neural network $f_i : \mathbb{R}^n \rightarrow \mathbb{R}^n$, and, for $X \in SPD(n)$, be given by:
\begin{equation}
\ell_i(X) = Q \text{diag}(f_i(\text{spec}(X))) Q^T
\end{equation}

where $\text{diag}: \mathbb{R}^n \rightarrow \mathbb{R}^{n \times n}$ is the diagonal injection map. Observe that the spectrum of the symmetric matrix now depends locally on $X$, allowing for considerably more expressivity than in Design 3 at the cost of a low-dimensional neural network map $f_i: \mathbb{R}^n \rightarrow \mathbb{R}^n$. Moreover, the orthogonal constraint on $P$ may be preserved throughout optimization via one of a variety of easy-to-implement methods \citep{Casado2019CheapOC, Anil2019SortingOL}.

Design 1 is naive, but very inefficient. Design 2 exploits manifold structure to be maximally efficient while being maximally expressive. Design 3 showcases the other extreme (relative to Design 1) and gives a maximally parsimonious vector field construction. Design 4 showcases a more flexible version of Design 3 that allows for considerably greater learning capability\footnote{Verified empirically.} while still being representationally efficient. The purpose of describing these designs is to underscore the trade-off between expressivity and parameter-efficiency in designing parameterized vector fields (Designs 1 and 2 vs. Designs 3 and 4) as well as the need to utilize manifold-specific structure to obtain a maximally expressive and efficient vector field design (Design 1 vs. Design 2). Additionally, we highlight that expressivity for parameter-constrained vector field designs can be nontrivially increased with insignificant overhead via the introduction of manifold-specific dependencies (Design 3 vs. Design 4).

\subsection{Vector Field Design for Spherical Space}

For the spherical vector field design, we again apply the general design construction referenced at the start of Appendix~\ref{sec:vecfielddesign}. 
Similar to $\mathbb{H}^n$, $\mathbb{S}^n$ is an $n$-dimensional manifold which we treat as embedded in $\mathbb{R}^{n+1}$. Hence we parameterize a neural network $n_i : \mathbb{R}^{n+1} \rightarrow \mathbb{R}^{n+1}$ and set
\begin{equation}
\ell_i(x) = \text{proj}_{T_x \mathbb{S}^n} (n_i (x))
\end{equation}
to obtain our neural network-parameterized vector fields. As in the hyperbolic case, this vector field design is efficient and expressive, since $T_x \mathbb{S}^n \cong \mathbb{R}^{n}$.

\subsection{Feature Map-induced Vector Fields for General Manifolds}
\label{sec:featuremapgeneralman}
There is no perfect analog of a hyperplane for general manifolds. Hence, there is no immediately natural feature map in the general case. Despite this, we attempt to present a reasonable analog to hyperplane projection that extends to general manifolds. In particular, for a geodesically complete\footnote{A manifold $\M$ is said to be geodesically complete if any geodesic can be followed indefinitely \cite{Lee1997RiemannianMA}.} manifold $\M$, consider specifying a pseudo-hyperplane by a point $p \in \M$ and a non-zero vector $v \in T_p \M \setminus \{\mathbf{0}\}$ whose orthogonal complement we exponentiate at the base point $p$ to give the following definition:

\begin{equation}
h_{p, v} = \exp_p (\{w \in T_p \M | w^T v = 0 \})
\end{equation}

This definition\footnote{This notion was originally introduced in the context of hyperbolic space in \citet{Ungar2009AGS}.} has the benefit of reducing to the usual Euclidean hyperplane definition when the manifold under consideration is $\mathbb{R}^n$. However, this hyperplane definition is not particularly suitable for general manifolds since it assumes geodesic completeness, which may not hold. Here we propose an alternative general definition of a hyperplane that exponentiates the intersection of a local orthogonal complement with a closed ball of radius $r$, $\bar{B}_r(0) \subset T_p \M$, given below: 

\begin{equation}
h_{p,v,r} = \exp_p (\bar{B}_r(\mathbf{0}) \cap \{w \in T_p \M | w^T v = 0 \})
\end{equation}

Notice that this $h_{p,v,r}$ pseudo-hyperplane is a strict generalization of $h_{p,v}$ that does not require geodesic completeness (since $r$ is finite), and that in the limit as $r \rightarrow \infty$ we recover $h_{p,v}$.

A general feature map can then be defined by projecting to such a pseudo-hyperplane: 

\begin{equation}
g_{p,v,r}(x) = \min_{y \in h_{p,v,r}} d_\M (x,y)
\end{equation}

where $d_\M$ is the induced geodesic distance on $\M$.


We test this general construction for hyperbolic space and compare it with the horosphere projection construction in Appendix \ref{sec:horoplane}. The general construction performs reasonably well, but does not perform as well as the horosphere projection we give in this section. A more natural and performant manifold-dependent map can frequently be obtained by carefully considering the particular structure of the manifold (e.g. the spectral projection we give for $SPD(n)$).


\section{Experimental Details}

\subsection*{Experiments on Hyperbolic Space}

\subsubsection{Datasets}
\label{hyperbolicdatasets}

We apply our hyperbolic RResNet to node classification and link prediction on four graph datasets with varying $\delta$-hyperbolicity. 

\textbf{Airport ($\delta = 1$).} Airport is a dataset consisting of $2236$ nodes where nodes represent airports and edges represent airline routes \citep{chami2019hyperbolic}. For node classification, each airport is given a label corresponding to the population of the country it is in. Each airport has a $4$-dimensional feature vector consisting of geographic information.

\textbf{Pubmed ($\delta = 3.5$) and CoRA ($\delta = 11$).} Pubmed and CoRA are both citation networks consisting of $2708$ and $19717$ nodes each \citep{Pubmed, Cora}. In citation networks, each node represents a paper and edges indicate a shared author between papers. Each node has a label consisting of what academic subareas the paper belongs to.

\textbf{Disease ($\delta = 0$).} Disease is a synthetic dataset generated by simulating the SIR disease spreading model \citep{chami2019hyperbolic}. Node labels for classification indicate whether a node was infected or not and node features indicate a particular node's susceptibility to the disease.

\subsubsection{Architectural and Training Details}
\label{sec:hyperbolicarch&traindetails}
All of our testing uses the Poincar\'e ball model \citep{nickel2017poincare} to represent hyperbolic space. 
We use a similar setup to \citet{chami2019hyperbolic} to test RResNet's performance on hyperbolic space. First, in order to reduce the parameter count, we use a linear layer from the input dimension to a lower dimension before using RResNet as an encoder. For link-prediction tasks we use a Fermi-Dirac decoder and for node-classification tasks we use a linear decoder \citep{chami2019hyperbolic}.

\definecolor{mydarkblue}{rgb}{0,0.08,0.65}

For our results using a feature map induced vector field, we take the projection onto a fixed number of horospheres. Each horosphere is randomly initialized with $\omega$ drawn uniformly from $\mathbb{S}^{n-1}$ and $b \sim \mathcal{N}(0,1)$. We pass the horocycle projections to a linear layer followed by a Euclidean nonlinearity (typically ReLU \citep{Nair2010RectifiedLU}). During the training of each network, $\omega$ and $b$ are optimized using the same optimizer as the rest of the network. For further details regarding implementation, please see the accompanying  \textcolor{mydarkblue}{\href{https://github.com/CUAI/Riemannian-Residual-Neural-Networks}{Github code}}.

Horosphere projections are not the only natural feature map one can use, one alternative we experimented with was using parametetrized real eigenfunctions of the hyperbolic Laplacian as feature maps but we were unable to achieve similar performance to horosphere projections (results were significantly worse).

We use $250$ horospheres for Disease, Airport, and CoRA and $50$ horospheres for Pubmed. Models were trained for $500$, $10000$, $5000$, and $5000$ epochs for Disease, Airport, Pubmed, and CoRA, respectively, with the Adam optimizer \citep{kingma2014adam}. All other hyperparameters, such as learning rate and weight decay, were determined using random search.

All experiments were run on a single NVIDIA Quadro RTX A6000 48GB GPU.






\subsubsection{Comparison Between Embedded and Horocycle-induced Vector Field Designs}
\label{sec:hypablation  }

\begin{table*}[h]
\centering
\renewcommand{\arraystretch}{1.5}
\scalebox{0.95}{
\begin{tabular}{ccccc}
\hline
\textbf{Dataset} & Disease $(\delta = 0)$ & Airport $(\delta = 1)$ & Pubmed $(\delta = 3.5)$ & CoRA $(\delta = 11)$ \\
\hline
\textbf{RResNet Embedded} & $75.0$\tiny$\pm 5$ & $83.0$\tiny$\pm 2.0$ & $\mathbf{73.2}$\tiny$\pm 1.0$ & $\mathbf{59.6}$\tiny$\pm 1.0$ \\
\textbf{RResNet Horo} & $\mathbf{76.8}$\tiny$\pm 2.0$ & $\mathbf{96.9}$\tiny$\pm 0.3$ & $71.4$\tiny$\pm 2.2$ & $52.4$\tiny$\pm 5.5$ \\
\hline
\end{tabular}}
\caption{Node classification results for RResNet with two different vector field designs (test F1 score is the metric given).}
\label{tab:hypevecfieldablation}
\end{table*}

In order to investigate the effect vector field design has, we look at the performance of RResNet when using the embedded or horosphere projection-induced vector field in Table \ref{tab:hypevecfieldablation}. On more hyperbolic datasets (Disease and Airport), the more geometrically principled design attains higher F1 scores. This effect is reversed on the less hyperbolic datasets (Pubmed and CoRA), indicating that a more geometrically principled vector field only helps when the data geometry is similar to the model geometry, as expected.

\subsubsection{Comparison Between Horocycle-induced and Pseudo-Hyperplane-induced Vector Field Designs}
\label{sec:horoplane}

\begin{table*}[h]
\centering
\renewcommand{\arraystretch}{1.5}
\scalebox{0.85}{
\begin{tabular}{ccccc}
\hline
\textbf{Dataset} & Disease $(\delta = 0)$ & Airport $(\delta = 1)$ & Pubmed $(\delta = 3.5)$ & CoRA $(\delta = 11)$ \\
\hline
\textbf{RResNet Horocycle} & $\mathbf{76.8}$\tiny$\pm 2.0$ & $\mathbf{96.9}$\tiny$\pm 0.3$ & $\mathbf{71.4}$\tiny$\pm 2.2$ & $\mathbf{52.4}$\tiny$\pm 5.5$ \\
\textbf{RResNet Pseudo-Hyperplane} & $\textbf{77.2}$\tiny$\pm 0.4$ & $90.3$\tiny$\pm 4.5$ & $66.7$\tiny$\pm 5.0$ & $41.4$\tiny$\pm 5.7$ \\
\hline
\end{tabular}}
\caption{Node classification results for RResNet with two different vector field designs (test F1 score is the metric given).}
\label{tab:horoplaneresults}
\end{table*}

In Table \ref{tab:horoplaneresults} we compare the RResNet construction with vector fields induced by projection to pseudo-hyperplanes (as defined in the main paper in Section \ref{sec:featuremap}) for hyperbolic space (RResNet Pseudo-Hyperplane) to the horocycle projection-induced vector field RResNet construction (RResNet Horocycle). Note that RResNet Pseudo-Hyperplane performs worse for most tasks, although the construction is more general (as mentioned in the main paper).


\subsubsection{Ablation Study}
\label{sec:hypablation}
\textbf{Nonlinearity Ablation}

\begin{table*}[h]
\centering
\renewcommand{\arraystretch}{1.5}
\scalebox{0.85}{
\begin{tabular}{ccccc}
\hline
\textbf{Dataset} & Disease $(\delta = 0)$ & Airport $(\delta = 1)$ & Pubmed $(\delta = 3.5)$ & CoRA $(\delta = 11)$ \\
\hline
\textbf{RResNet Horo w/o Nonlinearity} & $71.9$\tiny$\pm 9.2$ & $\mathbf{96.9}$\tiny$\pm 3.0$ & $\mathbf{71.2}$\tiny$\pm 1.1$ & $49.6$\tiny$\pm 2.0$ \\
\textbf{RResNet Horo} & $\mathbf{76.8}$\tiny$\pm 2.0$ & $\mathbf{96.9}$\tiny$\pm 0.3$ & $\mathbf{71.4}$\tiny$\pm 2.2$ & $\mathbf{52.4}$\tiny$\pm 5.5$ \\
\hline
\end{tabular}}
\caption{Node classification results for RResNet with and without a nonlinearity between layers (test F1 score is the metric given).}
\label{tab:hypevecfieldablation}
\end{table*}

To study the expressiveness of the horocycle induced vector field design, we ablate the nonlinearity in the vector field. With the nonlinearity, the F1 score either increases or remains the same across all datasets, which the advantage being most pronounced for Disease.

\textbf{Geometry Ablation}

\begin{table*}[h]
    \centering
    \renewcommand{\arraystretch}{1.5}
    \begin{tabular}{cc}
        \hline
        \textbf{Dataset} & Disease $(\delta = 0)$  \\ 
        \hline 
        RResNet Embedded (Euclidean) & $67.3$\tiny$\pm 21.0$ \\
        \textbf{RResNet Embedded (Hyperbolic)} & $75.0$\tiny$\pm 5.0$ \\
        RResNet Feature Map (Euclidean) & $73.1$\tiny$\pm 3.4$  \\ 
        \textbf{RResNet Feature Map (Hyperbolic)} & $\mathbf{76.8}$\tiny$\pm 2.0$  \\ 
        \hline
    \end{tabular}
    \caption{Node classification results of RResNet with different vector field designs and model geometry (test F1 score is the metric given). When swapping geometry for a specific model, all hyperparameters are kept the same, which we are able to do easily with our architecture.}
    \label{tab:hypgeoablation}
\end{table*}

We look at the performance of varying RResNets on the most hyperbolic dataset to identify the effect model geometry has in Table \ref{tab:hypgeoablation}. As expected, using hyperbolic space yields higher F1 scores with lower standard deviations. In particular, the high standard deviation of $21.0$ for ``RResNet Embedded (Euclidean)" indicates that it fails to properly learn in a number of trials.

\textbf{Residual Connection Ablation}

It is reasonable to try other residual connection implementations outside of our natural geometric Riemannian exp-map based implementation. In particular, one may try to implement a Riemannian residual neural networks directly via a gyrovector \cite{Ungar2009AGS} addition. We give the results in Table \ref{tab:hypvecfieldres} and show that not only is this method less desirable geometrically, but also gives worse results on our chosen benchmarks. The Euclidean model is given as a baseline and the Riemannian ResNet here is a reference. All models are implemented with a comparable number of parameters and are two layer residual neural networks.

\begin{table*}[htb!]
\centering
\renewcommand{\arraystretch}{1.5}
\begin{tabular}{cc}
\hline
\textbf{Dataset} & Airport $(\delta = 1)$ \\
\hline
\textbf{Euclidean} & $69.4$\tiny$\pm 1.8$ \\
\textbf{Gyrovector} & $60.8$\tiny$\pm 0.9$ \\
\textbf{RResNet Horo} & $\textbf{75.9}$\tiny$\pm 2.5$ \\
\hline
\end{tabular}
\caption{Node classification results for RResNet with three different residual connection designs (test F1 score is the metric given).}
\label{tab:hypvecfieldres}
\end{table*}



\subsection*{Experiments on the SPD Manifold}
\subsubsection{Datasets}
\label{sec:spd_appendix}

We apply our SPD architecture on four different video recognition tasks. For all tasks, we generate covariance or correlation matrices sampled from each video's frames. Given frames $t \in \{1, \dots, T\}$ and their corresponding feature vectors $x_t \in \R^n$, we generate a $n\times n$ covariance matrix by sampling the frames: $\text{X} = \frac{1}{T-1} \sum_{t=1}^T (x_t - \mu)(x_t - \mu)^T$. Optionally, we can divide the matrices by the standard deviations to instead generate correlation matrices. For certain tasks, we find that these have better conditioning. 

While covariance and correlation matrices are positive semi-definite, they are not necessarily SPD. In fact, they are only SPD if the set of sampled vectors, $\{x_1, \dots, x_T\}$, consists of $n$ linearly-independent vectors. If the sampled vectors $x_t$, $x_{t+1}$, are similar, which is the case for neighboring frames of a video, the matrices may be close to singular. This phenomenon poses issues in downstream tasks such as taking a matrix logarithm, which can create numerical instability. For all tasks, we preprocess our data by removing covariance matrices which fail a Cholesky decomposition.

\textbf{AFEW.} AFEW \citep{Dhall2011StaticFE} is an emotion recognition dataset consisting of 1,345 videos and 7 classes. As done in \citet{Huang2017ARN, Brooks2019RiemannianBN}, we use covariance matrices created from $20\times 20$ video frames, flattened into $400$-dimensional $x_t$ vectors. 

\textbf{FPHA.} The First-Person Hand Action Benchmark (FPHA) \citep{GarciaHernando2018FirstPersonHA} consists of 1,175 videos of humans performing 45 different tasks. The dataset includes the $(x, y, z)$ coordinates of 21 joint locations from a human hand. Following the approach of \citet{Hussein2013HumanAR}, for each frame, we flatten the coordinates into a 63-dimensional vector $x_t$. We then take the correlation matrices. We use subjects 1-3 for training and 4-6 for validation. 

\textbf{NTU RGB+D.} NTU RGB+D \citep{shahroudy2016ntu} is an action recognition dataset which includes the 3D locations of 25 body joints. NTU RGB+D is a large scale dataset with 56,880 videos and 60 tasks. For our $x_t$ vectors, we use the flattened versions of 3D joint coordinates as feature vectors. Our matrices have dimension 75.  

\textbf{HDM05.} Mocap Database HDM05 \citep{HDM05} is another action recognition dataset which includes 3D locations of 31 joints. Following the task designed in \citet{Huang2017ARN}, the goal is to classify each video clip into one of 117 action classes. We use the covariance matrices provided in \citet{Brooks2019RiemannianBN}.
 \begin{figure}[h]
\centering
\includegraphics[width=0.48\textwidth]{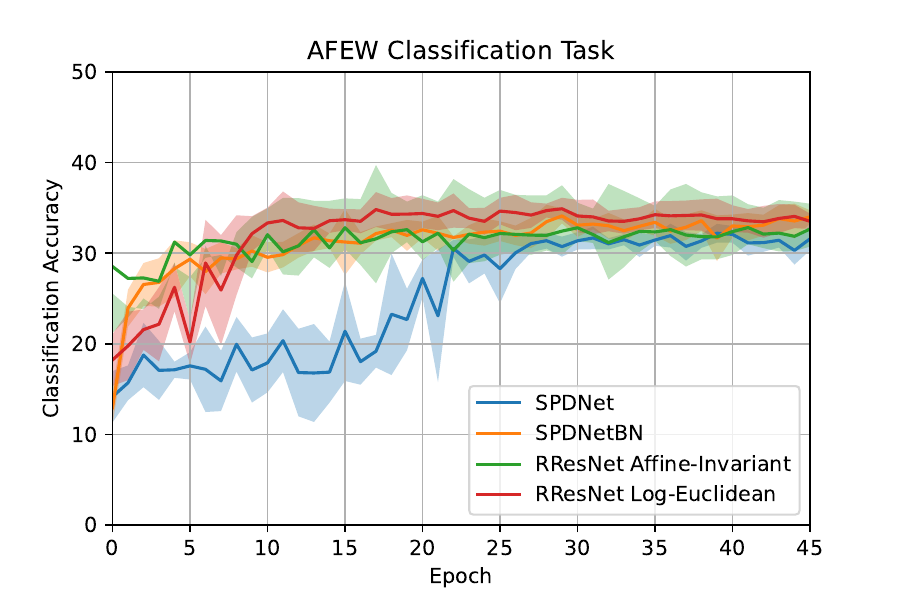}
\includegraphics[width=0.48\textwidth]{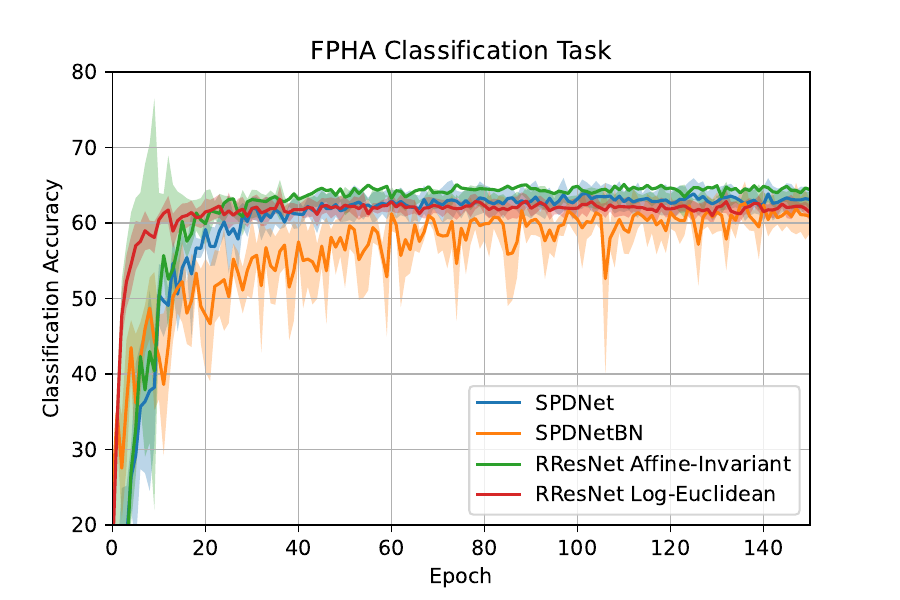}
\includegraphics[width=0.48\textwidth]{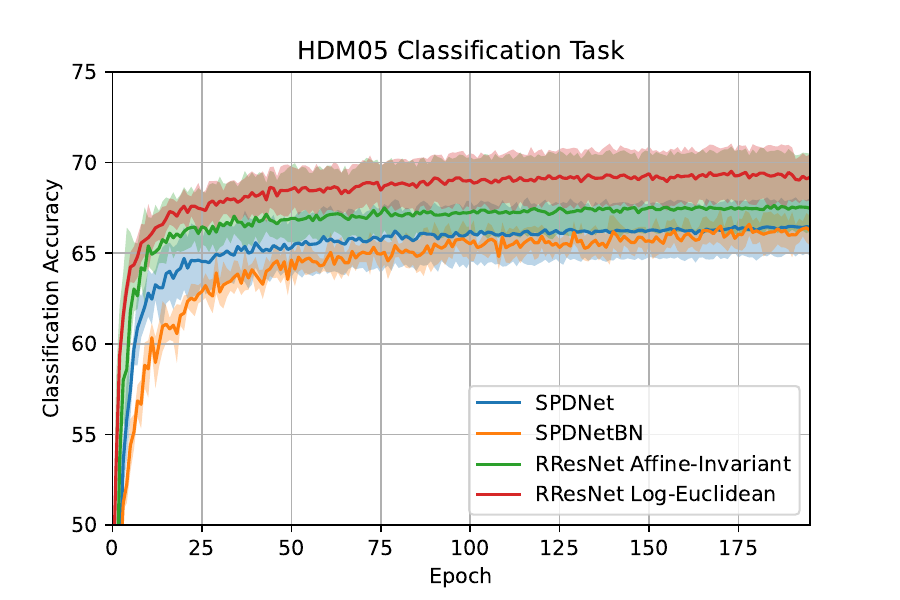}
\includegraphics[width=0.48\textwidth]{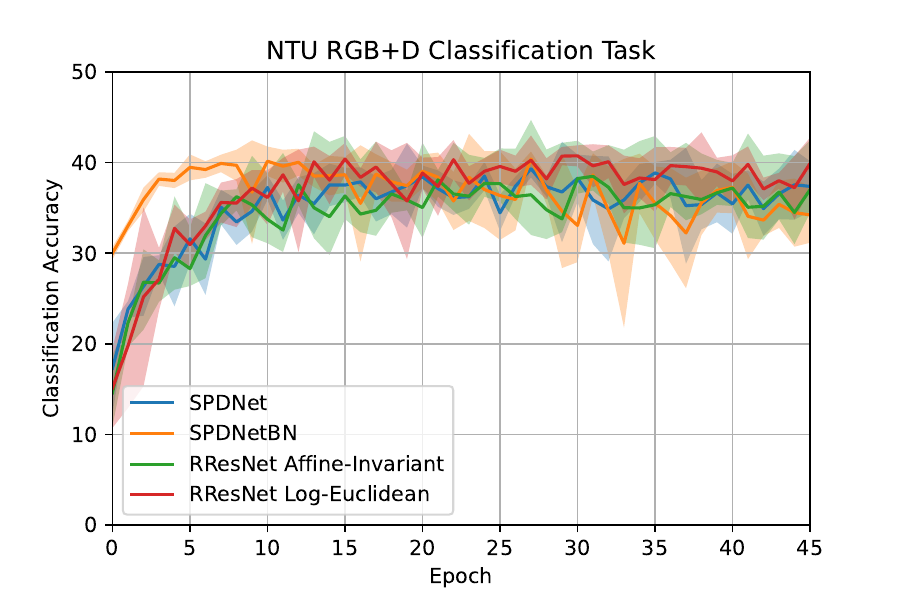}

\caption{Validation accuracies for our RResNet compared to the SPDNet \citep{Huang2017ARN} and SPDNetBN \citep{Brooks2019RiemannianBN} baselines. For each model, results are averaged over five trials. Error bars denote one standard deviation away from the mean accuracy. We observe that our model converges faster and achieves higher accuracies than SPDNet and SPDNetBN.}
\label{fig:spd_training}
\end{figure}

\subsubsection{Architectural Details}
\label{sec:spd_arch}

Given a dataset of covariance matrices, our goal is to classify a matrix into one of several classes. To illustrate, we give our architecture for the AFEW task as an example. Because of how costly it would be to parameterize vector fields at this dimension, we use a BiMap layer \citep{Huang2017ARN}, $\text{BiMap}^{d_{i}}_{d_{i+1}}: SPD(d_{i}) \rightarrow SPD(d_{i+1})$ as a base point remapping from $400 \times 400$ matrices to $50 \times 50$ matrices. We use vector field design 4 from Appendix \ref{sec:vecfielddesign}. In the context of this problem, we have:
\begin{equation}
\ell_1(X) = Q \text{diag}(f_1(\text{spec}(X))) Q^T
\end{equation}

where $f_1: \mathbb{R}^{50} \rightarrow \mathbb{R}^{50}$, $\text{spec} : SPD(50) \rightarrow \mathbb{R}^{50}$, $P \in O(50)$ (spec is defined above in Appendix \ref{sec:vecfielddesign}). In practice, we experiment with a variety of $f_1$ designs, such as sequences of linear layers or 1D convolutions. Note the vector field is a map $\ell_1: SPD(50) \rightarrow T\:SPD(50)$. We express our forward pass as

\begin{align}
    g(x) = \exp_{\text{BiMap}^{400}_{50}(x)}(\ell_1(\text{BiMap}^{400}_{50}(x))) 
\end{align} 

which is a map $g : SPD(400) \rightarrow SPD(50)$. Our $\exp$ map depends on the Riemannian metric we choose on the manifold. Thereafter we apply a logarithm to the eigenvalues of the $50 \times 50$ matrices (this helps linearize features \cite{Brooks2019RiemannianBN}). Lastly we flatten the matrices and use a linear map from dimension $2500$ to dimension $7$ (representing the $7$ different emotions). We use a simple cross entropy loss \citep{Goodfellow2015DeepL} to train the model. 

\subsubsection{Results}
 We compare our RResNet design above (Appendix \ref{sec:spd_arch}) to SPDNet \citep{Brooks2019RiemannianBN,Huang2017ARN}, a network architecture for SPD matrix learning. All models have a comparable number of parameters. To replicate the results of \citet{Brooks2019RiemannianBN}, we use a learning rate of $5\cdot 10^{-2}$ for the baseline. We find that any higher learning rate causes training instability. However, we observe that our model remains stable with a learning rate of  $1\cdot 10^{-1}$. Our model has faster convergence and achieves a higher accuracy than SPDNet and SPDNetBN (see Table~\ref{tab:spd_table} in the main paper and Figure~\ref{fig:spd_training} above). Moreover, our model's ability to switch out geometries (as given by the log-Euclidean and affine-invariant metrics) gives the ability to outperform prior work on all tasks. 

\subsubsection{Comparison Between Vector Field Designs}
For the SPD manifold, we illustrate differences between our four different vector field designs outlined in Appendix \ref{sec:vecfielddesign} on the AFEW task. Results are given in Table \ref{tab:spd_ablation}. Note that the chosen spectral map-induced vector field is very efficient in terms of parameter count and performs best in terms of accuracy.
\begin{table*}[h]
\centering
\scalebox{1}{
\begin{tabular}{lcr}
\toprule
& AFEW\citep{Dhall2011StaticFE} & Number of Parameters\\ 
\midrule
 Naive & $34.90$\tiny$\pm 0.74$& 6,290,007 \\
Structured & $34.76 $\tiny$\pm 1.07$& 1,664,407\\
Parsimonious & $34.82$ \tiny $\pm 1.82$&38,782\\
\textbf{Spectral Map}&$\mathbf{36.38} $\tiny$\mathbf{\pm 1.29}$& 45,057 \\
\bottomrule
\end{tabular}}
\caption{We compare the accuracy of our four vector field designs for the SPD manifold. We see that the spectral map provides the best balance of accuracy and parameter efficiency.}
\label{tab:spd_ablation}
\end{table*}

\subsubsection{Ablation Study}
\label{sec:spdablation}

\textbf{Nonlinearity Ablation}

 We ablate the nonlinearity in the spectral map design in Table~\ref{tab:spd_nonlin}, and find that the nonlinearity slightly improves performance. For AFEW, we use a one layer vector field, which is why the reported accuracies are the same.
\begin{table}[h]
\centering
\scalebox{1}{
\begin{tabular}{lccccc}
\toprule
& AFEW & FPHA & NTU RGB+D\ & HDM05\\ 
\midrule
Aff-Inv w/o Nonlinearity & $35.17$\tiny$\pm 1.78$& $65.03$\tiny$ \pm 2.22$ & $41.27$\tiny$ \pm 0.22$ &  $65.92 $\tiny$\pm 1.27$\\
Aff-Inv & $35.17$ \tiny $\pm 1.78$&$\mathbf{66.53} $\tiny$\mathbf{\pm 1.64}$ & $41.00$ \tiny $\pm 0.50$& $67.91$ \tiny${\pm 0.66}$\\
Log-Euc w/o Nonlinearity & $36.38$\tiny$\pm 1.29$& $65.25 $\tiny$\pm 2.14$ & $42.87$\tiny$\pm 0.83$ & $68.73 $\tiny$\pm 1.75$\\

Log-Euc&$\mathbf{36.38}$\tiny $\mathbf{\pm 1.29}$& $64.58 $\tiny$\pm 0.98$& $\mathbf{42.99 }$\tiny$\mathbf{\pm 0.23}$& $\mathbf{69.80}$\tiny$\mathbf{ \pm 1.51}$ \\
\bottomrule
\end{tabular}}

\caption{We show that classification accuracy either improves or remains the same with the nonlinearity. For AFEW, we use one layer in the vector field, which is why the reported accuracies are the same.}
\label{tab:spd_nonlin}
\end{table}

\textbf{Geometry Ablation}

We study the geometry of the SPD manifold by comparing our Riemannian ResNet to a Euclidean ResNet. For the Euclidean network, we treat each matrix as a Euclidean vector by flattening it into a length $n\times n$ vector. We then pass it through a Euclidean ResNet. Our results in Table~\ref{tab:eucspd} show that the Riemannian ResNets (Aff-Inv and Log-Euc) perform significantly better across all datasets. 

\begin{table}[h]
\centering
\scalebox{1}{
\begin{tabular}{lccccc}
\toprule
& AFEW & FPHA & NTU RGB+D\ & HDM05\\ 
\midrule
Euclidean & $30.08$\tiny$\pm 1.36$& $30.72 $\tiny$\pm 1.03$ & $34.63$\tiny$\pm 3.10$ & $0.80 $\tiny$\pm 0.10$\\
Aff-Inv & $35.17$ \tiny $\pm 1.78$&$\mathbf{66.53} $\tiny$\mathbf{\pm 1.64}$ & $41.00$ \tiny $\pm 0.50$& $67.91$ \tiny${\pm 1.27}$\\
Log-Euc&$\mathbf{36.38}$\tiny $\mathbf{\pm 0.24}$& $64.58 $\tiny$\pm 0.98$& $\mathbf{42.99 }$\tiny$\mathbf{\pm 0.23}$& $\mathbf{69.80}$\tiny$\mathbf{ \pm 1.51}$ \\
\bottomrule
\end{tabular}}

\caption{We show that the Euclidean ResNet performs worse across all datasets, and fails for HDM05.}
\label{tab:eucspd}
\vspace{-10px}
\end{table}

\textbf{Residual Connection Ablation}

Similar to the gyrocalculus used in \citet{ganea2018hyperbolic},  \citet{Lopez2021VectorvaluedDA} have extended gyrovector operations to the manifold of SPD matrices. In particular, the authors define M\"obius addition as $X \oplus Y = \sqrt{X}Y\sqrt{X}$ for SPD matrices $X, Y$. It is reasonable to ask how this purely algebraic, non-geometric construct performs when used to implement a residual connection. With this choice of addition, the residual connection for a ResNet specific to the SPD manifold would have the form $\ell_i(X) + X = \sqrt{\ell_i(X)}X\sqrt{\ell_i(X)}$. In Table~\ref{tab:gyrospd}, we show that this choice of addition struggles to reach the accuracy of our Riemannian ResNet design.

\begin{table}[h]
\centering
\scalebox{1}{
\begin{tabular}{lccccc}
\toprule
& AFEW & FPHA & NTU RGB+D\ & HDM05\\ 
\midrule
Gyrovector & $23.23$\tiny$\pm 0.98$& $61.33 $\tiny$\pm 4.74$ & $40.77$\tiny$\pm 3.10$ & $5.69 $\tiny$\pm 2.15$\\
Aff-Inv & $35.17$ \tiny $\pm 1.78$&$\mathbf{66.53} $\tiny$\mathbf{\pm 1.64}$ & $41.00$ \tiny $\pm 0.50$& $67.91$ \tiny${\pm 1.27}$\\
Log-Euc&$\mathbf{36.38}$\tiny $\mathbf{\pm 1.29}$& $64.58 $\tiny$\pm 0.98$& $\mathbf{42.99 }$\tiny$\mathbf{\pm 0.23}$& $\mathbf{69.80}$\tiny$\mathbf{ \pm 1.51}$ \\
\bottomrule
\end{tabular}}

\caption{We show that the Riemannian ResNet model with M\"obius addition struggles to reach the classification accuracies of our exponential map design. The difference is most pronounced on HDM05, where the gyrovector model struggles to learn meaningful representations.}
\label{tab:gyrospd}
\end{table}

\subsection*{Experiments on Spherical Space}

\subsubsection{Dataset}

We wish to explore the generality of our method: in particular, our ability to vary geometry without constructing entirely new operations for each manifold. 
We repeat one of the experiments tested on our hyperbolic RResNet, swapping out the hyperbolic manifold for the spherical manifold.

\textbf{CoRA.} This dataset is described above in Appendix~\ref{hyperbolicdatasets}. With $\delta=11$, CoRA is the least hyperbolic of the datasets tested with our hyperbolic RResNet. As such, we wanted to try swapping the RResNet geometry to better match the data geometry. 

\subsubsection{Architectural Details}

The design of our spherical RResNet is identical to that of our hyperbolic RResNet (described in Appendix~\ref{sec:hyperbolicarch&traindetails}), aside from switching the geometric representation from hyperbolic to spherical. As before, we first have a linear layer to move from the input dimension to a lower dimension. Then we use our RResNet as an encoder. Here we only test link prediction, so we use a Fermi-Dirac decoder.

We train for 2000 epochs using the Adam optimizer  \citep{kingma2014adam}, and we again found all hyperparameters via random search.

\subsubsection{Results}

We give results for link prediction on CoRA, displayed in Table~\ref{tab:sphereresults}. Mean and standard deviation across 5 separate trials are reported. 

\begin{table*}[htb!]
\centering
\scalebox{1.0}{
\begin{tabular}{clc}
\toprule
& \textbf{Dataset} & CoRA \\
& \textbf{Hyperbolicity} & $\delta = 11$ \\
\midrule
\parbox[t]{2mm}{\multirow{2}{*}{\rotatebox[origin=c]{90}{\scriptsize{Hyp}}}}
& RResNet & $88.9$\tiny$\pm 0.2$ \\
& RResNet Graph & $87.6$\tiny$\pm 0.9$ \\ 
\midrule
\parbox[t]{2mm}{\multirow{2}{*}{\rotatebox[origin=c]{90}{\scriptsize{Sphere}}}}
& RResNet & $90.7$\tiny$\pm 1.0$ \\
& RResNet Graph & $\mathbf{91.7}$\tiny$\pm 0.4$ \\
\bottomrule
\end{tabular}}
\caption{Test accuracy of various models, in terms of ROC AUC.}
\label{tab:sphereresults}
\end{table*}

We find that even the most basic spherical RResNet design, which does not use a feature map, outperforms both hyperbolic RResNets. This indicates that our model improves when endowed with geometry more suitable for given data. Additionally, our model's flexibility allows us to easily obtain such results without altering the architecture.

\section{Theoretical Results}
\label{sec:theory}

In this section we give a variety of theoretical results that demonstrate the principled nature of our Riemannian ResNet construction.

\subsection{Reduction to Standard ResNet in Euclidean Case}
We show that our construction agrees with the standard ResNet when the underlying manifold is Euclidean space and when we are using the embedded vector field design. This aligns with our intuition and shows that our construction is a natural generalization of previous work.

\begin{prop}
    When $\M^{(i)} \cong \R^{d_i}$, our RResNet with the embedded vector field design is a standard residual network.
\end{prop}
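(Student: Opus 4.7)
The plan is to unwind the definitions of all the ingredients (exponential map, tangent space projection, embedded vector field) in the Euclidean case, and observe that each one collapses to the trivial Euclidean analogue, so that the general recurrence reduces on the nose to the standard residual update. Since the statement is essentially a consistency check that our construction is genuinely a generalization, the argument will be a short sequence of substitutions rather than a nontrivial derivation.

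First, I would instantiate the three pieces of the general recurrence in the Euclidean setting. (i) For $\M^{(i)} \cong \R^{d_i}$, the Riemannian exponential map is $\exp_p(v) = p + v$ for any $p,v \in \R^{d_i}$; this is already noted in the background section. (ii) For $\M^{(i)} \cong \R^{d_i}$, the tangent space at any point is canonically identified with $\R^{d_i}$ itself, so the orthogonal projection $\mathrm{proj}_{T_x \M^{(i)}}$ used in the embedded vector field design is the identity on $\R^{d_i}$. (iii) The transition maps $h_i: \R^{d_{i-1}} \to \R^{d_i}$ are, in the embedded setting with matching ambient dimensions, just standard (possibly linear) maps between Euclidean spaces, exactly as in a standard ResNet's width-changing layers.

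Combining these, the embedded vector field design gives $\ell_i(y) = \mathrm{proj}_{T_y \R^{d_i}}(n_i(y)) = n_i(y)$ for any standard neural network $n_i: \R^{d_i} \to \R^{d_i}$. Substituting into the recurrence,
\begin{equation}
    x^{(i)} = \exp_{h_i(x^{(i-1)})}\!\bigl(\ell_i(h_i(x^{(i-1)}))\bigr) = h_i(x^{(i-1)}) + n_i\!\bigl(h_i(x^{(i-1)})\bigr),
\end{equation}
which is precisely the standard residual block update with pre-residual transformation $h_i$ and residual branch $n_i$. Iterating over $i = 1, \ldots, m$ yields exactly a standard (possibly width-varying) ResNet.

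There is no real obstacle here; the only thing worth being careful about is specifying what counts as ``a standard ResNet'' when the widths $d_{i-1}$ and $d_i$ differ, which is handled cleanly by the base-point map $h_i$ (playing the role of the usual projection/linear shortcut). I would state this convention explicitly at the start of the proof so that the final equality above is unambiguous.
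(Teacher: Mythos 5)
Your proof is correct and follows essentially the same route as the paper's: identify the projection as the identity, the exponential map as vector addition, and substitute into the recurrence to obtain $x^{(i)} = h_i(x^{(i-1)}) + n_i(h_i(x^{(i-1)}))$. The paper additionally notes the special case where all $d_i$ coincide and $h_i$ is the identity, which you cover equivalently by treating $h_i$ as the standard width-changing shortcut.
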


\begin{proof}
    Note that the embedded vector fields take the form:
    \begin{equation}
    \ell_i(x) = \text{proj}_{T_x \mathbb{R}^n}(n_i(x)) = n_i(x)
    \end{equation}
    
    for a parameterized neural network $n_i : \mathbb{R}^{d_{i-1}} \rightarrow \mathbb{R}^{d_i}$, meaning that our $\ell_i$ are standard neural networks. The $h_i : \mathbb{R}^{d_{i-1}} \rightarrow \mathbb{R}^{d_i}$ can be replaced by Euclidean linear layers that go from dimension $d_{i-1}$ to dimension $d_i$. Also observe since $\exp_x(v) = x + v$, our neural network construction becomes:
    \begin{align}
        f(x) &= x^{(m)}\\
        x^{(0)} &= x\\
        x^{(i)} &= \exp_{h_i(x^{(i - 1)})}(\ell_i(h_i(x^{(i - 1)}))) \\
        &= h_i(x^{(i - 1)}) + \ell_i(h_i(x^{(i - 1)})) \\
        &= h_i(x^{(i - 1)}) + n_i(h_i(x^{(i - 1)}))
    \end{align}
    
    where the last equality holds $\forall i \in [m]$. Moreover, if all $d_i$ are the same, we can use the identity map for our $h_i$, implying:
    \begin{equation}
    x^{(i)} = x^{(i - 1)} + n_i(x^{(i - 1)}) \ \forall i \in [m]
    \end{equation}
    
    Hence our neural network architecture reduces precisely to that of Euclidean residual neural networks. 
\end{proof}

\subsection{Hyperbolic Neural Networks (HNNs) \citep{ganea2018hyperbolic} Learn via a Hyperbolic Bias}
\label{sec:hnnred}

We make note of the fact that although the gyrovector generalization of Euclidean networks offered by \citet{ganea2018hyperbolic} is algebraic and generalizable to many manifolds such as hyperbolic space and the manifold of SPD matrices \citep{Lopez2021VectorvaluedDA}, the linear layer of the construction is Euclidean, except for the hyperbolic bias addition. We illustrate this in what follows.

\begin{prop}
For $x \in \mathbb{H}^n$ and hyperbolic matrix-vector multiplication \citep{ganea2018hyperbolic} defined by
\begin{equation}
M^\otimes (x) = \tanh\left( \frac{||Mx||}{||x||} \tanh^{-1} (||x||) \right) \frac{Mx}{||Mx||}
\end{equation}
where $M : \mathbb{R}^n \rightarrow \mathbb{R}^n$ is a linear map, we have
\begin{equation}
M_2^\otimes(M_1^\otimes (x)) = \tanh\left(\frac{||M_2||||M_1x||}{||x||} \tanh^{-1} (||x||) \right) \frac{M_2M_1x}{||M_2|| ||M_1x||} = (M_2M_1)^{\otimes} (x)
\end{equation}
\end{prop}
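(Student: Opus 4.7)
The plan is a direct unwinding of definitions. First I would set $y := M_1^\otimes(x)$ and record the two quantities that control the formula for $M^\otimes$: the scalar factor $\tanh(\alpha_1)$ with $\alpha_1 := \frac{\|M_1 x\|}{\|x\|}\tanh^{-1}(\|x\|)$, and the unit direction $M_1 x / \|M_1 x\|$. Because $y$ is just a positive scalar multiple of a unit vector, I immediately get $\|y\| = \tanh(\alpha_1)$ and therefore $\tanh^{-1}(\|y\|) = \alpha_1$. This single identity is the reason the construction composes cleanly, since the arctanh in the outer application undoes exactly the tanh produced by the inner one.

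Next I would apply $M_2^\otimes$ to $y$ and simplify the three ingredients of the definition one at a time. (i) From $y = \tanh(\alpha_1)\,M_1 x / \|M_1 x\|$, linearity of $M_2$ gives $M_2 y = \tanh(\alpha_1)\,M_2 M_1 x / \|M_1 x\|$, hence $\|M_2 y\| = \tanh(\alpha_1)\,\|M_2 M_1 x\| / \|M_1 x\|$. (ii) The ratio $\|M_2 y\|/\|y\|$ therefore collapses to $\|M_2 M_1 x\|/\|M_1 x\|$, and combining with the identity $\tanh^{-1}(\|y\|) = \alpha_1$ from the previous paragraph, the argument of the outer $\tanh$ becomes $\frac{\|M_2 M_1 x\|}{\|M_1 x\|} \cdot \frac{\|M_1 x\|}{\|x\|}\tanh^{-1}(\|x\|) = \frac{\|M_2 M_1 x\|}{\|x\|}\tanh^{-1}(\|x\|)$. (iii) The unit vector $M_2 y / \|M_2 y\|$ simplifies to $M_2 M_1 x / \|M_2 M_1 x\|$, since the common scalar $\tanh(\alpha_1)/\|M_1 x\|$ cancels.

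Putting the three pieces back into the definition of $M_2^\otimes(y)$ yields exactly
\[
M_2^\otimes(M_1^\otimes(x)) = \tanh\!\left(\frac{\|M_2 M_1 x\|}{\|x\|}\tanh^{-1}(\|x\|)\right)\frac{M_2 M_1 x}{\|M_2 M_1 x\|},
\]
which is $(M_2 M_1)^\otimes(x)$ by definition, giving the result. The intermediate expression written in the statement with $\|M_2\|\|M_1 x\|$ in place of $\|M_2 M_1 x\|$ appears to be a typographical slip; the clean derivation passes through $\|M_2 M_1 x\|$, and the final equality with $(M_2 M_1)^\otimes(x)$ holds regardless.

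There is no real obstacle here beyond careful bookkeeping: the only substantive observation is that $\tanh^{-1}(\|M_1^\otimes(x)\|) = \alpha_1$, after which everything telescopes. The proposition is essentially the statement that the M\"obius scalar multiplication is an action, which in turn is why \citet{ganea2018hyperbolic}'s hyperbolic linear layer reduces to ordinary Euclidean matrix multiplication up to a (hyperbolic) bias, supporting the earlier claim in Section~\ref{sec:hnnred}.
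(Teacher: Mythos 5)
Your proof is correct and follows essentially the same route as the paper's: substitute the closed form of $M_1^\otimes(x)$, use $\tanh^{-1}(\|M_1^\otimes(x)\|)=\frac{\|M_1x\|}{\|x\|}\tanh^{-1}(\|x\|)$ so the inner $\tanh$ cancels, and simplify the ratio and direction. You are also right that the intermediate expression with $\|M_2\|\,\|M_1x\|$ in place of $\|M_2M_1x\|$ (which appears both in the statement and in the paper's last line) is only valid under the abuse $\|M_2M_1x\|=\|M_2\|\,\|M_1x\|$; the clean computation gives $\tanh\left(\frac{\|M_2M_1x\|}{\|x\|}\tanh^{-1}(\|x\|)\right)\frac{M_2M_1x}{\|M_2M_1x\|}=(M_2M_1)^\otimes(x)$ directly, so the conclusion that composition collapses to a single Euclidean linear map is unaffected.
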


\begin{proof}
For two linear maps of the same size $M_1, M_2$ we have:
\begin{equation}
M_2^\otimes(M_1^\otimes (x)) =  \tanh\left( \frac{||M_2M_1^\otimes (x)||}{||M_1^\otimes (x)||} \tanh^{-1} (||M_1^\otimes (x)||) \right) \frac{M_2M_1^\otimes (x)}{||M_2M_1^\otimes (x)||}
\end{equation}
\begin{equation}
= \tanh\left( \frac{||M_2M_1^\otimes (x)||}{\tanh\left( \frac{||M_1x||}{||x||} \tanh^{-1} (||x||) \right)} \left( \frac{||M_1x||}{||x||} \tanh^{-1} (||x||) \right) \right) \frac{M_2M_1^\otimes (x)}{||M_2M_1^\otimes (x)||}
\end{equation}
\begin{equation}
= \tanh\left(\frac{||M_2||||M_1x||}{||x||} \tanh^{-1} (||x||) \right) \frac{M_2M_1x}{||M_2|| ||M_1x||} =  (M_2M_1)^{\otimes} (x)
\end{equation}
\end{proof}

We see that we have cancellation that de facto reduces the learning of two hyperbolic linear layers with no hyperbolic bias to the learning of a single hyperbolic layer. Inductively, this precise argument applies to any number of layers. This reduction is characteristic to what one sees in the case of Euclidean networks, and more importantly, from the above equation we see that learning hyperbolic linear layers de facto reduces to learning Euclidean linear maps ($M_1$ and $M_2$ above) that are placed in between an initial Riemannian $\log$ map (taken at the origin) and a trailing Riemannian $\exp$ map (taken at the origin).

Thus, the main non-Euclidean, hyperbolic construct in \citet{ganea2018hyperbolic} is the hyperbolic bias, introduced in Section 3.2 of \citet{ganea2018hyperbolic}. Our method is distinctly different in that even simple residual linear layers make use of geodesic information; hence, learning does not reduce to the Euclidean case.


\section{Comparison with Other Constructions}
\label{sec:comparison_construction}


Here we elaborate on how our method compares with other constructions, elucidating a claim made in the main paper. We note that compared to other methods, our construction is fully general (in the sense that it extends to all Riemannian manifolds) and better conforms with geometry. For example, general methods like HNN \citep{ganea2018hyperbolic}, HGCN \cite{chami2019hyperbolic}, and SPDNetBN \citep{Brooks2019RiemannianBN} use the fact that hyperbolic space and the SPD manifold are spaces with everywhere non-negative curvature, meaning that geodesics are unique. As such, core building blocks of these models globally project to a Euclidean space via a map known as the Riemannian $\log$ map, which can be thought of as an inverse to the exponential map. This global projection relies on the choice of a particular base point (equivalent to the base point in the $\exp$ map definition), which is arbitrarily selected. Once this projection has taken place, prior methods usually simply perform a Euclidean operation, and project back to the manifold via the usual Riemannian $\exp$. This system does not generalize to manifolds which are not globally diffeomorphic to Euclidean space (note this is quite restrictive and different from being locally diffeomorphic to Euclidean space), and furthermore, the reliance on fundamentally Euclidean operations and arbitrary base point destroys the geodesic geometry: the log map can be thought of as linearizing manifold geometry with respect to a certain base point---the further away points are from this base point, the more distorted the projected geometry.

More specialized methods like \citet{chen-etal-2022-fully, shimizu2020hyperbolic} instead work with algebraic operations that exist only for hyperbolic space. These do not generalize to arbitrary manifolds, which limits potential applications. By comparison, our method simultaneously generalizes to arbitrary manifolds and directly conforms with underlying geometry.

\end{document}